\documentclass{article}

\usepackage[preprint]{neurips_2026}

\usepackage[utf8]{inputenc} 
\usepackage[T1]{fontenc}    
\usepackage{hyperref}       
\usepackage{url}            
\usepackage{booktabs}       
\usepackage{amsfonts}       
\usepackage{nicefrac}       
\usepackage{microtype}      
\usepackage{xcolor}         
\usepackage{graphicx} 
\usepackage{amsmath} 

\usepackage{microtype}
\usepackage{graphicx}
\usepackage{subcaption}
\usepackage{booktabs} 

\usepackage{hyperref}
\usepackage{wrapfig}

\usepackage{amsmath}
\usepackage{amssymb}
\usepackage{mathtools}
\usepackage{amsthm}

\usepackage{algorithm}
\usepackage{algorithmic}

\usepackage{multirow}

\usepackage{makecell}
\usepackage{tabularx}

\usepackage[capitalize,noabbrev]{cleveref}

\theoremstyle{plain}
\newtheorem{theorem}{Theorem}[section]

\newtheorem{lemma}[theorem]{Lemma}

\theoremstyle{definition}
\newtheorem{definition}[theorem]{Definition}

\theoremstyle{remark}

\usepackage{enumitem}
\newlist{myitems}{enumerate}{1}
\setlist[myitems, 1]
{label=\arabic{myitemsi}., 
leftmargin=15pt,
rightmargin=10pt
}

\title{Voronoi Histograms for Adaptive Vectorization of Expected Persistence Diagrams}

\author{%
  Kaifeng Zhang \\
  School of Artificial Intelligence\\
  Nanjing University, China\\
\texttt{zhangkf2022@lamda.nju.edu.cn} 
  \And
  Kai Ming Ting \\
  School of Artificial Intelligence\\
  Nanjing University, China\\
\texttt{tingkm@nju.edu.cn} 
}

\begin{document}

\maketitle

\begin{abstract}
Persistence Diagram (PD) is known to capture point cloud topology effectively, but its computation has high time complexity. Expected Persistence Diagram (EPD) has been developed to reduce the time cost by studying the topology of multiple subsets of a point cloud and it serves as a distribution of topological features. Existing EPD vectorizations often rely on predefined point transformations, such as Gaussian or landscape functions. We study an alternative discretization based on Voronoi histograms, which trades smooth functional approximation for adaptive partition-based counting. We propose to use Voronoi Diagram-based histogram as the vectorization of EPD, without imposing an explicit smooth point transformation model. Under stated separation and normalization conditions, we establish stability bounds and characterize when the histogram representation preserves Wasserstein-scale variation. We demonstrate the effectiveness of our proposed representation  on real-world datasets which have significant topological features for classification and dimensionality reduction tasks.
\end{abstract}

\section{Introduction}
\label{sect: intro}

Topological Data Analysis (TDA) \citep{wasserman2016topological} aims to capture topological features of point cloud $X$ through Persistent Homology (PH) \citep{edelsbrunner2000topological}. PH analyzes the topological features of $X$ through  a nested sequence of simplicial complexes \citep{salnikov2018simplicial} called filtration. The birth and death of a cycle, e.g., a ring (1-dimensional cycle) or a void (2-dimensional cycle), of the sequence are encoded in \textit{Persistence Diagram} (PD) $D=\{r_i=(b_i,d_i)\in\Omega|1\leq i \leq N(D)\}$, where each $r\in D$ is referred as a topological feature, $\Omega=\{(t_1,t_2)\in\mathbb{R}^2|t_2> t_1\}$ is an open half-plane and $N(D)$ is the number of topological features. PD has been used to address problems in various fields, such as biology \cite{liu2022dowker,meng2020weighted,xia2018multiscale,xia2015multidimensional} and chemistry \cite{lee2017quantifying,townsend2020representation}. Each PD can be utilized as a topological summary of a point cloud and used in various machine learning tasks. The challenge is that PD cannot be directly fed as input into a machine learning algorithm. Vectorization \citep{2017persistenceimage,Bubenik2020ThePL,perslay,chevyrev2018persistence,dong2024persistence,polanco2019adaptive} and kernel methods \citep{carriere2017sliced,kusano2017kernel,le2018persistence,Reininghaus2015ASM}  for PD are developed to meet this challenge. 

A key problem in using PD is its high time complexity. As pointed by \citealp{zomorodian2004computing}, the time complexity to obtain PD in the worst case is $O(m^3)$, where $m$ is the number of simplices in the filtration. When we consider the 1-dimensional PD of point cloud $X$, i.e., rings, we need to consider up to 2-simplices and the time complexity can be $O(|X|^9)$. Several works have attempted to reduce the time cost by subsampling \cite{cao2022approximating,chazal2018density,chazal2015subsampling}. We study a vectorization method for a subsample-induced PD called \textit{Expected Persistence Diagram} (EPD). Since a PD $D=\{r_i=(b_i,d_i)\in\Omega|1\leq i \leq N(D)\}$ can be equivalently represented as a measure 
$\mu=\sum_{i=1}^{N(D)}\delta_{r_i}$, where $r_i$ is a topological feature and $\delta_r$ is the Dirac point mass at $r\in \Omega$. Empirical EPD is defined as the average $\bar{\mu}=\frac{1}{n}\sum_{j=1}^{n}\mu_i$, where each $\mu_i$ is the PD of a sampled point cloud $X_i\subset X$ of the same size, referred as sampled PD for short. \textbf{Equivalently, an EPD is a distribution of topological features supported on the open half plane $\Omega=\{(t_1,t_2)\in\mathbb{R}^2|t_2>t_1\}$ \cite{chazal2018density}, with each topological feature $r=(b,d)\in\Omega$ encoding the birth and death of a cycle in the filtration.}


Current methods \cite{chazal2015subsampling,wu2024estimation} build a representation of EPD by averaging the vectors of  sampled PDs in their core computations, i.e. $\frac{1}{n}\sum_{i=1}^{n}\Psi(\mu_i)$. For the choice of $\Psi$, \citealp{chazal2015subsampling} uses the Persistence Landscape (PL); \citealp{wu2024estimation} extends the choice from PL to Persistence Image (PI) \cite{2017persistenceimage}, Persistence Silhouettes (PS) \cite{chazal2014stochastic} and Persistence Weighted Gaussian Kernels (PWGK) \cite{kusano2016persistence}. In order to vectorize each sampled PD $D$, a predefined continuous point transformation function $f$ \cite{perslay}, whose value reaches its peak at $r\in D$, is used to transform point $r$ in PD into a function. The discrete form of $\textbf{op}(\{f(r)\}_{r\in D})$ is then used as the vectorization of $D$, where $\bf op$ is a  permutation invariant operation. When \textbf{op} is summation, averaging the vectors of sampled PDs is equivalent to averaging the vectors $f(r)$ of each topological feature $r$ in EPD. 

Existing EPD vectorizations make different design choices. PI, PS, and PL provide smooth and stable functional summaries, but require selecting a point transformation and discretization grid. These choices can blur local mass differences or emphasize certain regions of the (birth,death) plane. We 
propose to  treat the empirical EPD as a distribution of topological features, and utilize Vrep, a simple histogram based on Voronoi Diagrams \cite{reem2011geometric}, to  represent an EPD effectively. Vrep makes a different choice: it uses a data-dependent Voronoi partition and records empirical mass in each cell.

The main contributions of this work are: (1) Propose to represent EPD as a distribution of topological features by using histogram based on
    an unsupervised Voronoi Diagram partition mechanism. (2) Provide stability analysis and a conditional Wasserstein (an inherent metric applicable to the space of EPDs \cite{divol2021estimation})-related bound that clarifies when Voronoi histograms can separate well-resolved EPDs. (3) Conduct an empirical evaluation of our proposed method on supervised point cloud classification and unsupervised dimensionality reduction tasks.
    

\section{Background}
\label{sect: back}
We provide the pertinent information about PD and EPD (refer to \citealp{chazal2018density,chazal2021introduction} for details).

\textbf{Persistence Diagram.} Let $g:\mathcal{X}\rightarrow\mathbb{R}^{+}$ denote a function on space $\mathcal{X}$, where $\mathbb{R}^{+}$ stands for positive real numbers, $\mathcal{X}$ is a finite and open subset of Euclidean space. $g$ is computed based on a finite point cloud $X$ sampled from manifold $\mathcal{M}\subset\mathcal{X}$. At scale $\epsilon\geq 0$, the sublevel set $\mathcal{X}_{\epsilon}^g=\left\{x\in\mathcal{X}\mid g(x)\leq\epsilon\right\}$ encodes the topological information in $\mathcal{X}$. For $\gamma\leq\epsilon$, we can have the nested sublevel sets $\mathcal{X}_\gamma^g\subseteq \mathcal{X}_\epsilon^g$. By increasing scale $\epsilon$ from $0$, we obtain a filtration, a nested sequence of topological spaces $\{\mathcal{X}_\epsilon^g\}_{0\leq\epsilon<\infty}$. 

\begin{figure}[h]
  \centering
\includegraphics[width=0.9\textwidth]{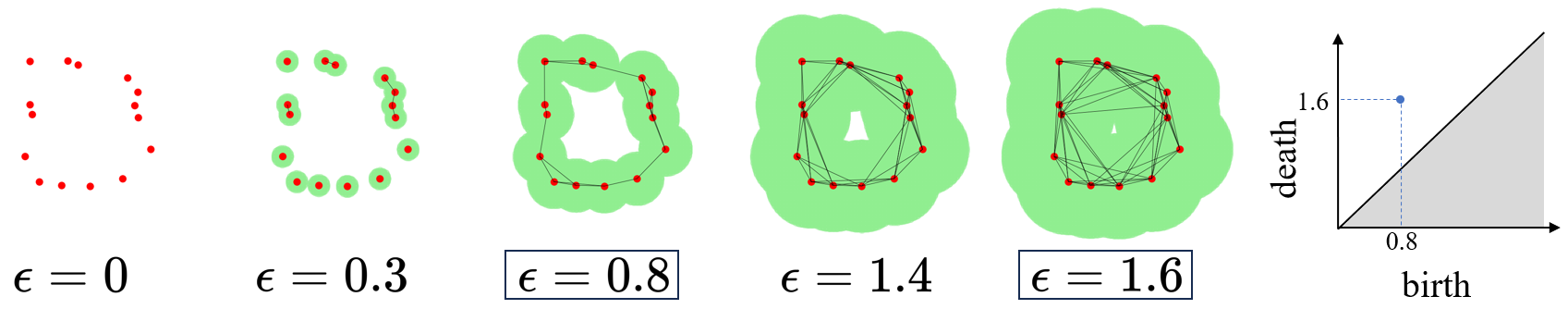}
  \caption{Rips filtration on a 2D point cloud and the corresponding 1-dimensional PD.}
  \label{fig:demo PD}
\end{figure}


A cycle is considered `born' at $b\in\mathbb{R}$ when it first emerges in $\mathcal{X}_{b}^g$, and it `dies' at $d\in\mathbb{R}$ when it ceases to exist in $\mathcal{X}_{p}^g$ for any $p>d$.  0-dimensional cycles are connected components; 1-dimensional cycles are rings or loops; 2-dimensional cycles are voids, etc. Topological feature $r=(b,d)$ is presented in the form of PD $D=\{r_i=(b_i,d_i)\in\Omega|1\leq i \leq N(D)\}$, where $N(D)$ is the number of topological features. In Figure \ref{fig:demo PD}, we choose Rips filtration \cite{vr1995} $g(\cdot)=2\min_{x\in X}\ell(\cdot,x)$, where $\ell$ is Euclidean distance, and there is only one ring in the point cloud which is born at $\epsilon=0.8$ and dies at $\epsilon=1.6$, i.e., $(b,d)=(0.8,1.6)$.

\textbf{Expected Persistence Diagram.} A PD $D=\{r_i=(b_i,d_i)\in\Omega|1\leq i \leq N(D)\}$ can be equivalently represented as a counting measure $\mu$ on $\Omega$ given by $A\in \mathcal{B}\rightarrow\mu(A)=\sum_{i=1}^{N(D)}\delta_{r_i}(A)$,
where $\mathcal{B}$ is the class of all Borel subsets of $\Omega$ and $\delta_r$ denotes the Dirac point mass at $r\in \Omega$. When each sampled PD is a random draw from a distribution $P$, its EPD, denoted as $\mathbb{E}[\mu]$, is defined as $A\in \mathcal{B}\rightarrow  \mathbb{E}[\mu](A) = \mathbb{E}[\mu(A)]$ \cite{chazal2018density}.


\begin{wrapfigure}{r}{0.5\textwidth}
    \vspace{-0.8em}
    \centering
\includegraphics[width=0.24\textwidth]{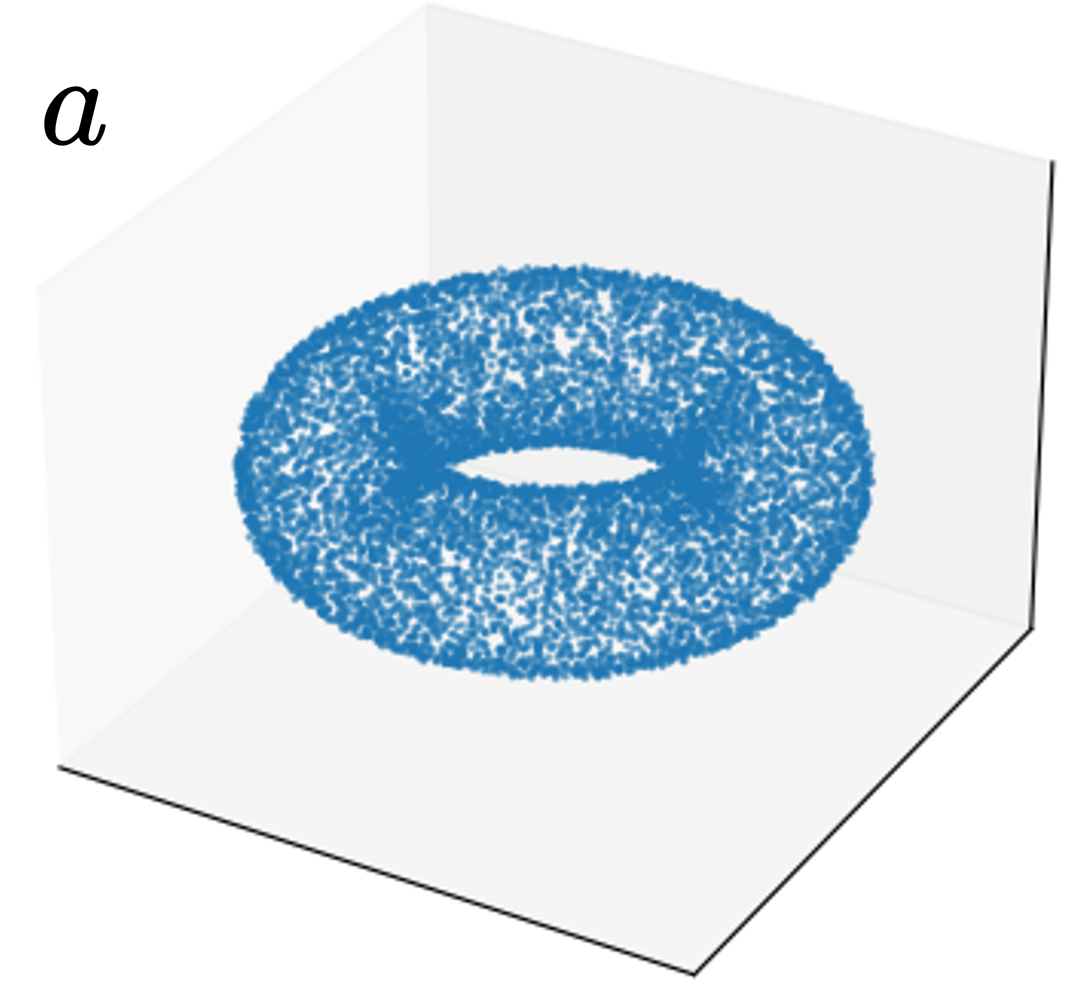}
\includegraphics[width=0.24\textwidth]{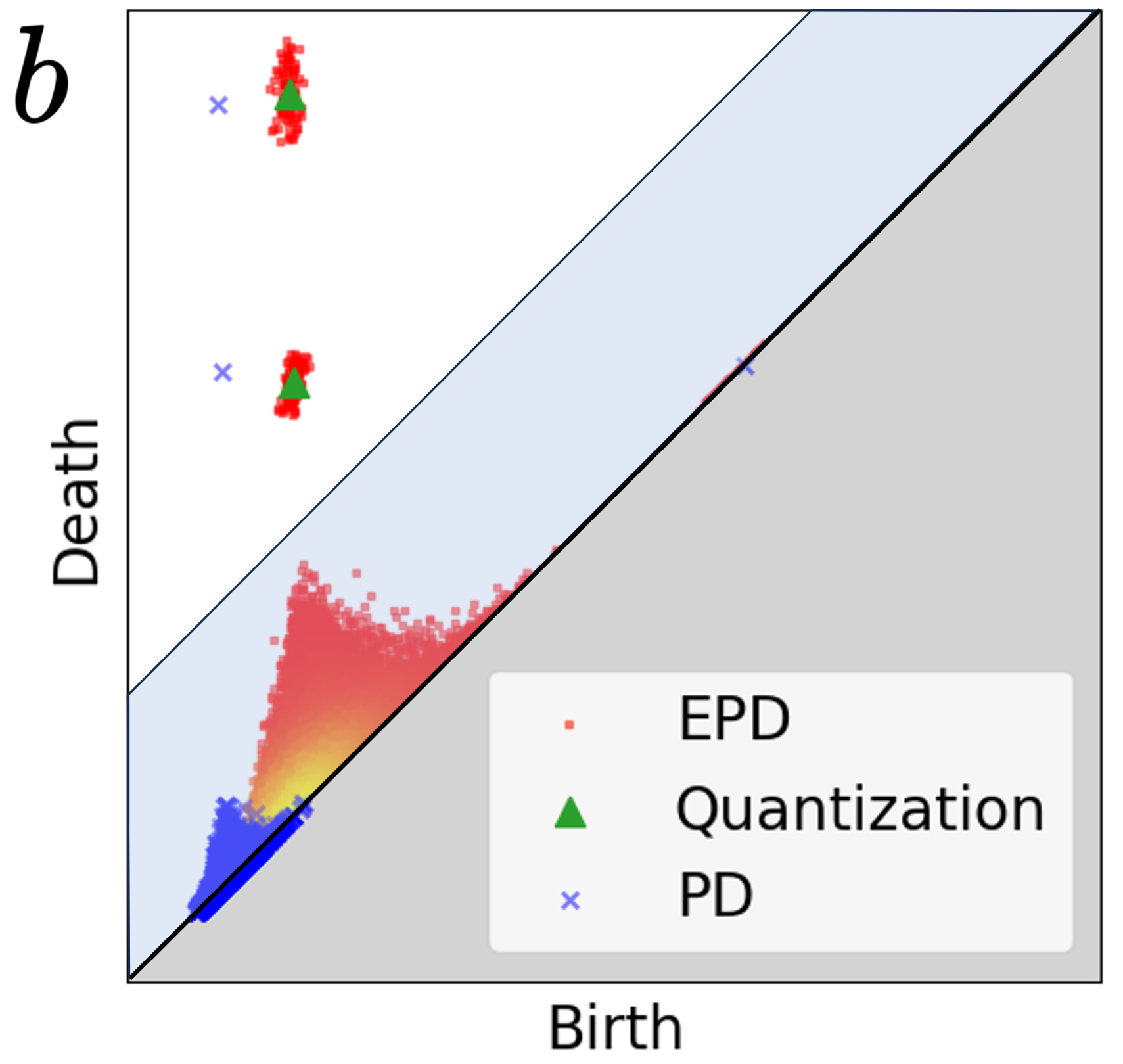}
    \caption{(a) A torus-shaped point cloud with 20000 points, and (b)  its 1-dimensional PD, EPD and EPD Quantization outcomes. The transparent blue area contains noise topological features.}
    \label{fig: EPD_demo}
    \vspace{-0.8em}
\end{wrapfigure}

Given a finite set $\{\mu_1,\mu_2,...,\mu_n\}$, consisting of sampled PDs from $P$, where $n$ is referred as the number of sampled PDs, the empirical EPD is defined as $\bar{\mu}=\frac{1}{n}\sum_{i=1}^{n}\mu_i$. The support of $\bar{\mu}$ is $\mathcal{S}_{\bar{\mu}}=\cup_{i=1}^{n}D_i$, where $D_i=\{r_j=(b_j,d_j)\in\Omega|1\leq j\leq N(D_i)\}$ is the support of the sampled PD $\mu_i$. EPD can be viewed as a distribution \cite{chazal2018density} of topological features supported on the open half plane $\Omega$. Quantization \cite{divol2021estimation} has been developed to reduce the support size of EPD. Examples of the PD, EPD and EPD Quantization outcomes of a point cloud are shown in Figure \ref{fig: EPD_demo}. An inherent metric applicable to the space of EPDs \cite{divol2021estimation} with the same mass, is Wasserstein distance \cite{villani2009wasserstein}. In a space of EPDs with different masses, the Optimal Partial Transport metric ($OT_p$) \cite{figalli2010optimal} is used to allow any mass transportation from or to the diagonal $\partial\Omega$. 

\section{Related Work}
\label{sect: relwork}

Current methods \cite{chazal2015subsampling,wu2024estimation} vectorize EPD by averaging the vectorization\footnote{A detailed description on the vectorization methods of PD is provided in Appendix A.} of each sampled PD. The vectorization of a sampled PD is in the form of $\textbf{op}(\{f(r)\}_{r\in D})$, where \textbf{op} is permutation invariant operation and $f$ is a predefined continuous point transformation function. Current methods can be classified into linear and nonlinear representations according to the choice of \textbf{op}.

 A linear representation \cite{wu2024estimation} $\Psi$ of PD $D = \{r_i = (b_i
,d_i) \in \Omega | 1 \leq i \leq N(D)\}$ is a summary of $D$ in the form $\Psi(D)=\sum_{i=1}^{N(D)} f(r_i)=\int_\Omega f(u) d\mu(u)$,
for a given measurable function $f$ on $\Omega$, where $\mu$ is the measure form of $D$. The corresponding vectorization of EPD is $\mathbb{E}[\Psi(D)]=\int_\Omega f(u)d \mathbb{E}[\mu](u)$. The vectorization of empirical EPD  
is then the average value 
$\frac{1}{n}\sum_{i=1}^{n}\Psi(D_i)$ 
of the vectorized forms of its sampled PDs. According to \citealp{wu2024estimation}, Persistence Image (PI) \cite{2017persistenceimage}, Persistence Silhouettes (PS) \cite{chazal2014stochastic} and Persistence Weighted Gaussian Kernel's feature map (PWGK) \cite{kusano2016persistence} are linear representations.

An existing nonlinear technique is Persistence Landscape (PL) \cite{chazal2015subsampling}. The nonlinearity roots from the kmax operation in $\Psi$ instead of summation, i.e., $\Psi(D)=\text{kmax}_{r\in D}f(r)$, where $f(r)$ is a function defined on closed interval $[0,T]$, for $t\in [0,T]$, $\Psi(D)(t)=\text{kmax}(\{f(r_1)(t),...,f(r_{N})(t)\})$ and kmax is the operation that takes the k-th largest value in the set. This kmax operation can be viewed as an approximation of weighted
summation. The representation of EPD is then the average PL of its sampled PDs.

These transformations introduce smoothness, bandwidth, and grid-resolution choices. Such choices are useful for stability and regularization, but may blur local mass differences or emphasize regions that are not optimal for a given dataset.

\section{Representation of EPD}
\label{sect: Vrep}

We propose to directly treat EPD as a distribution of topological features and develop a representation of EPD based on Voronoi Diagrams built upon a set of random codebooks. A codebook $C$ is a finite set whose element is a point in $\Omega$ or a subset of $\Omega$. Our method is motivated by Lemma 2 in \citealp{divol2021estimation}\footnote{\citealp{divol2021estimation} tries to reduce the size of measure $\mu$ by finding the codebook $C$ that minimize $OT_p(\hat{\mu}(C),\mu)$ and then uses the optimal $C$ as the new support. The relation between our work and \citealp{divol2021estimation} is discussed in Appendix \ref{append: relation}.}, which states that for a measure $\mu$, given any codebook $C= \{c_1,...,c_k\}$, the coefficient $\mu(V(c_i))$ of a Voronoi cell $V(c_i)$ centered at $c_i$ can well represent measure $\mu$. Therefore, we will use this coefficient as the representation.

\textbf{Lemma 2.} [\citealp{divol2021estimation}] Let $C= \{c_1,...,c_k\}$. Let $\hat{\mu}(C)=\sum_{i=1}^{k}\mu(V(c_i))\delta_{c_i}$, where $V(c_i)$ is the Voronoi Cell that centered at $c_i$. Let $\nu=\sum_{j=1}^{k} \alpha_j \delta_{c_j}$ for some coefficients $\alpha_1,...,\alpha_k\geq 0$. Then $OT_p(\hat{\mu}(C),\mu)\leq OT_p(\nu,\mu)$, where $OT_p$ is Optimal Partial Transport metric.






\subsection{Voronoi Diagram-based Representation}
\label{sec-VD}
 For a codebook $C\subset\Omega$, we treat the EPD $\bar{\mu}=\frac{1}{n}\sum_{i=1}^{n}\mu_i$ as a distribution of topological features and define the codebook-based representation of $\bar\mu$ in the form of a histogram \footnote{Similar technique has been used as part of a PD representation named Persistence Bag of Word \cite{zielinski2019persistence}, where the assignment of topological features to each bin is based on Gaussian Mixture Model \cite{reynolds2009gaussian}.}: 
\begin{equation*}
    \Phi(\bar{\mu},C) =[\bar{\mu}(V(c_1)),\bar{\mu}(V(c_2)),...,\bar{\mu}(V(c_k))],
\end{equation*}
where $k$ is the size of codebook $C$ and $V(c_i)$ represents the Voronoi Cell centered at $c_i$, i.e., $V(c_i)=\{x\in\Omega|\forall j\neq i,\|x-c_i\|_2\leq\|x-c_j\|_2\}$.

\begin{wrapfigure}{r}{0.6\textwidth}
    \vspace{-0.8em}
    \centering
\includegraphics[width=0.59\textwidth]{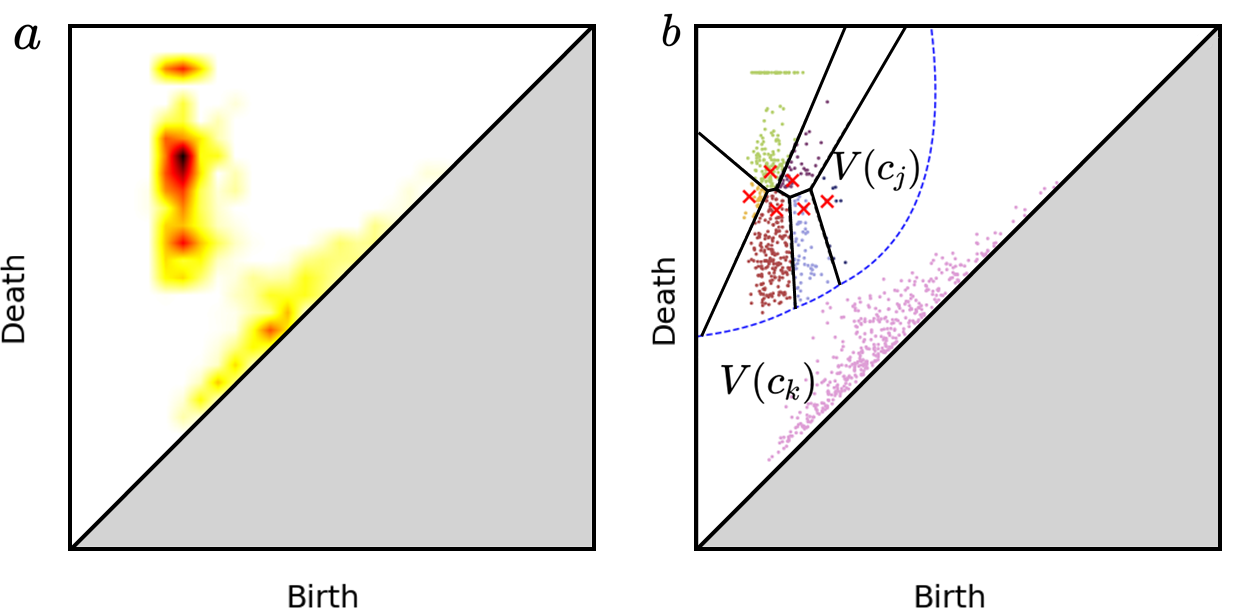}
    \caption{Example of EPD (a) and Voronoi Diagram built from codebook (b). The red crossed points are codebook $C$. The solid black lines are the boundary of the Voronoi Diagram built from $C$. The blue dashed curve is the boundary between the Voronoi Cell due to the diagonal and other Voronoi Cells.}
    \label{fig: vrep}
    \vspace{-1.2em}
\end{wrapfigure}
Although $\Phi(\bar{\mu},C)$ is not designed for each sampled PD, it still can be equivalently expressed in an averaged form, that is, $\Phi(\bar{\mu},C)=\frac{1}{n}\sum_{i=1}^n\Psi(D_i)$, where $\Psi(D_i)=[\mu_i(V(c_1)),...,\mu_i(V(c_k))]$ and $\mu_i$ is the corresponding measure form of $D_i$. In order to align with the setting in \citealp{divol2021estimation}, we define a variant of $\Phi$ which has a codebook of size $k$, with one Voronoi Cell centered at the diagonal $\partial\Omega$, i.e. boundary of the open half plane $\Omega=\{(t_1,t_2)\in\mathbb{R}^2|t_2> t_1\}$. Voronoi Cell centered at the diagonal is defined as $V(c_k)=\{x\in\Omega|\forall j\neq k,\|x-c_j\|_2\geq\min_{y\in c_k}\|x-y\|_2\}$, where $c_k=\partial\Omega$ and each of the other $c_i$ for $i<k$ is a point located inside $\Omega$. An illustration of Voronoi Diagram with a Voronoi Cell center at diagonal is given in Figure \ref{fig: vrep}.



A normalization of $\Phi(\bar{\mu},C)$ to unit $l_1$-norm can be achieved by a preprocessing step which normalizes the EPD, i.e., $\bar{\mu}\leftarrow\bar{\mu}/\bar{\mu}(\Omega)$. We can treat this normalized measure as a distribution since its integral over $\Omega$ is 1. In addition, this normalization ensures that any two EPDs have the same mass. Hence it allows us to measure their dissimilarity by Wasserstein distance directly,
without using the Optimal Partial Transport metric. 
So in order to simplify the analysis, we will consider the representation of the normalized EPD in the rest of this paper. A more detailed discussion on the advantage of normalization is given in Appendix \ref{appendix: norm}.

Given a dataset of EPDs $\{\bar{\mu}_1,\bar{\mu}_2,...,\bar{\mu}_m\}$,  we sample $t$ codebooks $C_i^j\subset\mathcal{S}_{\bar{\mu}_i},  j=1...t$. Each codebook $C_i^j$ is a set of size $k$, with each element being sampled from EPD (distribution of topological features) $\bar{\mu}_i$. This produces a random codebook set $S_i = \{C_i^j\}_{1\leq j \leq t}$ for each EPD $\bar{\mu}_i$.
The final codebook set $S=\cup_{i=1}^{m}S_i$ is used to represent the given dataset of EPDs. From the codebook-based representation $\Phi$, we define the Voronoi-based Representation (Vrep) of EPD as following. 




\begin{definition}[Voronoi-based Representation]
\label{def: Vrep}
Given a codebook set $S=\cup_{i=1}^{m}S_i$ built from an EPD dataset $\{\bar{\mu}_1,\bar{\mu}_2,...,\bar{\mu}_m\}$. The Voronoi-based Representation (Vrep) of an EPD  $\bar{\mu}_i$ is defined as $\hat{\Phi}(\bar{\mu}_i) = \bigoplus_{C\in S} \Phi(\bar{\mu}_i,C),$ where $\bigoplus$ stands for a concatenation operation. An illustration is given in Figure \ref{fig: vrep demo}. 

\end{definition}


\begin{figure}[h]
    \centering
\includegraphics[width=0.6\textwidth]{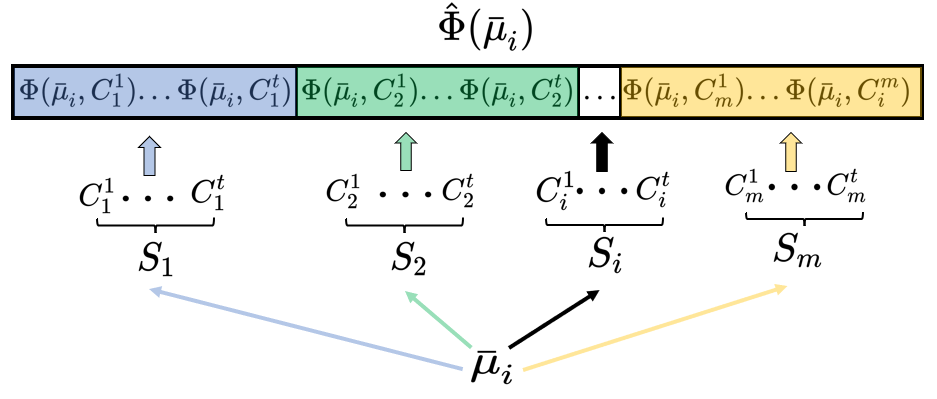}
    \caption{Illustration of Vrep $\hat{\Phi}(\bar{\mu}_i)$ of EPD $\bar{\mu}_i$. Each $S_i$ consists of $t$ codebooks, with each codebook $C$ sampled from EPD $\bar{\mu}_i$.}
    \label{fig: vrep demo}
\end{figure}

As an alternative to Vrep, if each codebook $C\in S$ has  diagonal $c_k=\partial\Omega$  while the other $k-1$ points are in the open half plane $\Omega$, we denote it as $\text{Vrep}_{d}$.


Vrep is data-dependent because for an EPD $\bar{\mu}_i$, $\hat{\Phi}(\bar{\mu}_i)$ is influenced by both the codebook set $S_i$ from EPD $\bar{\mu}_i$ and the codebook set $\cup_{j\neq i}S_j$ from other EPDs in the dataset. We will elaborate this dependence in the Appendix \ref{sect: data-dependence}. 

There are three possible choices of codebook for sampling a codebook $C=\{c_1,...,c_k\}$ from an EPD $\bar{\mu}$ with support $\mathcal{S}_{\bar{\mu}}=\cup_{i=1}^{n}D_i$, where $D_i=\{r_j=(b_j,d_j)\in\Omega|1\leq j \leq N(D)\}$: (1) Default: each $c_i$ is sampled from $\mathcal{S}_{\bar{\mu}}$ with each $r\in \mathcal{S}_{\bar{\mu}}$ having the same weight; (2) Persistence-Weighted: each $c_i$ is sampled from $\mathcal{S}_{\bar{\mu}}$ with each $r=(b,d)\in \mathcal{S}_{\bar{\mu}}$ having weight $d-b$; (3) Uniform: each $c_i$ is sampled from a uniform distribution supported on a fixed rectangle above the diagonal $\partial\Omega$.
 The former two choices are relevant to EPD and the last one is not. For Vrep$_d$, these three choices determine the sampling method for the remaining $k-1$ points in codebook $C$, while $c_k$ is fixed as the diagonal $\partial\Omega$. Unless otherwise stated, the default choice of codebook  is utilized in the proposed EPD representation. We examine the effects of the other two codebook choices in Experiments Section.

Like all finite-dimensional vectorizations of PD/EPD, Vrep is lossy. Its information loss is controlled by the codebook size, Voronoi cell geometry, and normalization. Compared with PI/PS, Vrep does not smooth each point by a fixed analytic kernel, but instead aggregates mass inside data-dependent cells. This makes it potentially more faithful to empirical EPD mass allocation. ATOL \cite{royer2021atol} is another unsupervised PD vectorization method based on codebook; Appendix \ref{append: relation} clarifies how Vrep differs as an EPD-specific Voronoi histogram representation.




\subsection{Stability}
\label{sect: stability}

We show that Vrep is stable under the perturbation of EPD. We start with the definition of EPD perturbation. For an EPD $\bar{\mu}$, a perturbation refers to a minor movement of location of the Dirac point mass of its sampled PDs. We first provide the stability result of the codebook-based representation $\Phi$ and then extend it to Vrep $\hat{\Phi}$.

Let $\mu_i=\sum_{j=1}^{N(D_i)} \delta_{r_j}$ be a  measure of sampled PD $D_i=\{r_j=(b_j,d_j)\in\Omega|1\leq j\leq N(D_i)\}$.

\begin{definition}[EPD Perturbation]
\label{def: EPD perturbation}
    For an EPD $\bar{\mu}=\lim_{n\rightarrow\infty}\frac{1}{n}\sum_{i=1}^{n}\mu_i$, a $\Delta$-perturbation of EPD is defined as the perturbation of $r_j$ in $D_i$ (the support of $\mu_i$), and the perturbation of each $r_j$ is bounded within a $\Delta$-radius open ball, i.e., $\|\hat{r}_j-r_j\|_2<\Delta$, where $\hat{r}_j$ is the perturbed version of $r_j$. And we assume that the random perturbation $\Delta\sim \mathcal{N}(0,\Sigma)$. The perturbed EPD is denoted as $\bar{\mu}^\prime$. 
\end{definition}



This definition implies that $W_1(\bar{\mu},\bar{\mu}^\prime)\leq\Delta$. Given that $\Phi(\bar{\mu}^\prime,C^\prime)-\Phi(\bar{\mu},C)=\Phi(\bar{\mu}^\prime,C^\prime)-\Phi(\bar{\mu},C^\prime)+\Phi(\bar{\mu},C^\prime)-\Phi(\bar{\mu},C)$, there are two kinds of error in the stability analysis we need to consider,i.e., Measure Error \& Codebook Error, as shown in the following inequality: $\|\Phi(\bar{\mu}^\prime,C^\prime)-\Phi(\bar{\mu},C) \|_1\leq \underbrace{\|\Phi(\bar{\mu}^\prime,C^\prime)-\Phi(\bar{\mu},C^\prime)\|_1}_{\text{Measure Error}}+\underbrace{\|\Phi(\bar{\mu},C^\prime)-\Phi(\bar{\mu},C)\|_1}_{\text{Codebook Error}}.$

The Measure Error term is caused by the perturbation of measure $\bar{\mu}$ (with fixed codebook $C^\prime$). The Codebook Error term is caused by the perturbation of codebook $C$ (with fixed measure $\bar{\mu}$). We consider the Measure and Codebook Error in Lemma \ref{lemma: lipschitz} and \ref{lem: stab2} respectively.


\begin{lemma}[Stability w.r.t. perturbation of measure $\bar{\mu}$ with fixed codebook $C$]
\label{lemma: lipschitz}
    For an EPD $\bar\mu=\lim_{n\rightarrow\infty}\frac{1}{n}\sum_{i=1}^{n}\mu_i$ and an optimal matching $\eta$: $\text{support}(\bar{\mu})\rightarrow\text{support}(\bar{\mu}^\prime)$ in the definition of 1-Wasserstein distance between $\bar{\mu}$ and $\bar{\mu}^\prime$ ,given any codebook $C$ of size $k$, it holds that $        \|\Phi(\bar{\mu},C)-\Phi(\bar{\mu}^\prime,C)\|_1\leq M\cdot W_1(\bar{\mu},\bar{\mu}^\prime),$
    where $M$ is a constant determined by measure $\bar{\mu}$.
\end{lemma}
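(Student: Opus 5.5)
The plan is to turn the histogram difference into a count of mass that crosses Voronoi cell boundaries under the optimal matching $\eta$. Then we bound that crossing mass by the transport cost divided by a characteristic distance to the boundaries, which is what produces the constant $M$ depending on $\bar\mu$.

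\textbf{Step 1: coupling and cell-crossing mass.} Because $\bar\mu$ and $\bar\mu'$ are normalized (or arise by moving atoms, so masses agree), $W_1$ is attained by a transport plan. Under a $\Delta$-perturbation this plan can be taken to be the matching $\eta$ induced by $r_j\mapsto\hat r_j$. For the cell $V(c_i)$ we write
\[
\bar\mu(V(c_i))-\bar\mu'(V(c_i))=\int \bigl(\mathbf 1_{V(c_i)}(x)-\mathbf 1_{V(c_i)}(\eta(x))\bigr)\,d\bar\mu(x).
\]
Summing absolute values over $i$, the indicator difference is nonzero only when $x$ and $\eta(x)$ lie in different cells, and each such point contributes at most $2$. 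Hence
\[
\|\Phi(\bar\mu,C)-\Phi(\bar\mu',C)\|_1\le 2\,\bar\mu\bigl(\{x:\ x,\eta(x)\text{ in different cells}\}\bigr).
\]
Boundary ties have measure zero for the absolutely continuous Gaussian perturbation in Definition \ref{def: EPD perturbation}. For atoms of $\bar\mu$ lying exactly on a boundary, we fix a tie-breaking rule.

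\textbf{Step 2: Markov-type bound.} Let $\delta(x)$ be the distance from $x$ to the union of cell boundaries, where the boundaries are the perpendicular bisectors and, for Vrep$_d$, the parabola-like curve separating the diagonal cell. If $x$ and $\eta(x)$ lie in different cells, the segment joining them crosses a boundary, so $\|x-\eta(x)\|_2\ge\delta(x)$. Restricting to atoms with $\delta(x)>0$, let $\delta_{\min}(\bar\mu,C)=\min\{\delta(r):r\in\mathcal S_{\bar\mu},\ \delta(r)>0\}$. On the crossing set we then have $1\le \|x-\eta(x)\|_2/\delta_{\min}$. Integrating gives
\[
\bar\mu(\text{crossing set})\le \frac{1}{\delta_{\min}}\int\|x-\eta(x)\|_2\,d\bar\mu(x)=\frac{W_1(\bar\mu,\bar\mu')}{\delta_{\min}}.
\]
This yields $M=2/\delta_{\min}$, which is determined by $\bar\mu$ through its support (and by the given codebook).

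\textbf{Main obstacle.} The difficult case is atoms of $\bar\mu$ sitting exactly on or arbitrarily near a boundary. In particular, when $C\subset\mathcal S_{\bar\mu}$ under default sampling, the codebook points themselves are atoms, but they lie at cell centers, which is harmless. Atoms lying exactly on boundaries can be excluded almost surely, since the Gaussian perturbation and the generic position of the codebook make boundary coincidences a null event. Because the support is finite, $\delta_{\min}>0$ in that case.

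If one needs the limiting EPD with infinite support, I would instead split the mass. The part in a $\tau$-neighborhood of the boundaries is controlled by $\bar\mu$'s boundary mass $m(\tau)$, and the rest is handled by the Markov bound with $\delta_{\min}$ replaced by $\tau$. One then chooses $\tau$ so that both pieces are bounded by a constant times $W_1$. This requires an assumption such as $m(\tau)\le L\tau$ with $W_1$ small, and I would state that the resulting $M$ depends on $\bar\mu$ through this constant.
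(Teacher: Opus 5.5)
\textbf{Gap: the lemma is about a measure with a density, and there your argument gives no linear bound.} Your Steps 1--2 are correct for a finitely supported $\bar\mu$ in general position. They also work with any optimal coupling, so you do not need the unjustified claim that the optimal plan is $r_j\mapsto\hat r_j$. But they give $M=2/\delta_{\min}(\bar\mu,C)$, which depends on the codebook, and for atomic measures this cannot be avoided. Put a bisector at distance $s$ from an atom of weight $w$ and move the atom by $2s$: the histogram changes by $2w$ at cost $2ws$. So that branch proves less than the lemma, which asserts a constant determined by $\bar\mu$ alone.

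The more serious problem is that the lemma concerns the limiting EPD $\bar\mu=\lim_{n\to\infty}\frac1n\sum_i\mu_i$, which the paper treats as having a density $p$ (Chazal--Divol). Its support straddles the cell boundaries, so $\delta_{\min}=0$. Your fallback with a single threshold $\tau$ gives
\[
\|\Phi(\bar\mu,C)-\Phi(\bar\mu',C)\|_1\le 2\Bigl(m(\tau)+\frac{W_1(\bar\mu,\bar\mu')}{\tau}\Bigr)\le 2\Bigl(L\tau+\frac{W_1(\bar\mu,\bar\mu')}{\tau}\Bigr).
\]
Making the first term $O(W_1)$ forces $\tau=O(W_1)$, which makes the second term of order one. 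The best choice $\tau=\sqrt{W_1/L}$ only gives order $\sqrt{L\,W_1}$.

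This is not just loose bookkeeping. For an arbitrary equal-mass $\bar\mu'$ the linear bound is false as soon as $\bar\mu$ has density between $p_0>0$ and $P$ near a straight piece of cell boundary of length $\ell$. Reflecting the mass in a strip of width $s$ across that piece changes the histogram by at least $2p_0\ell s$ at transport cost at most $P\ell s^2$, so the ratio blows up like $1/s$. Any proof must therefore use the specific form of the perturbation, and yours never does.

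\textbf{The missing idea, and how to repair your route.} The paper's proof rests on the perturbation's structure. $\bar\mu'$ is the law of $X+z$ with $X\sim\bar\mu$ and $z\sim q=\mathcal N(0,\Sigma)$ independent, so it has density $r=p*q$. Since $p$ is Lipschitz on the compact region $\Omega'$ (finite maximal filtration value), pointwise
\[
|p(x)-r(x)|\le\int q(z)\,|p(x-z)-p(x)|\,dz\le\mathrm{Lip}(p)\int\|z\|\,q(z)\,dz .
\]
Integrating over each cell and summing gives a constant of the form $\mathrm{Lip}(p)\,\mathrm{Vol}(\Omega')$, independent of $C$. Then $\int\|z\|\,q(z)\,dz$ is traded for $W_1(\bar\mu,\bar\mu')$ via the hypothesis that the matching $\eta$, i.e.\ the perturbation itself, is optimal. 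What is needed is $\int\|z\|\,q(z)\,dz\le W_1(\bar\mu,\bar\mu')$; the reverse inequality holds for free.

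You can repair your own route with the same two ingredients. Because the displacement is independent of the position, bound the crossing mass separately for each displacement $z$ rather than with one threshold:
\[
(\bar\mu\otimes q)\bigl(\{(x,z):\ x \text{ and } x+z \text{ lie in different cells}\}\bigr)\le\int q(z)\,m(\|z\|)\,dz\le L\int\|z\|\,q(z)\,dz .
\]
Step 1 then gives $\|\Phi(\bar\mu,C)-\Phi(\bar\mu',C)\|_1\le 2L\int\|z\|\,q(z)\,dz$, which is linear. This needs only a bounded density, so that $m(\tau)\le L\tau$ with $L$ of order $\|p\|_\infty$ times the total boundary length in $\Omega'$. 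For a convex $\Omega'$ that length is at most $k$ times its perimeter, since the point cells are convex. You still need the same optimality of the perturbation coupling to convert $\int\|z\|\,q(z)\,dz$ into $W_1$.
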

From Definition \ref{def: EPD perturbation}, we have the result on the Measure Error term: $\|\Phi(\bar{\mu},C)-\Phi(\bar{\mu}^\prime,C)\|_1\leq M\Delta$.

We consider the Codebook  Error term in Lemma \ref{lem: stab2}, where the upperbound is related to the dimension $k$ of representation $\Phi$. Perturbation causes a change in a single dimension up to $C_0 \epsilon$. A representation with higher dimension is less stable.
\begin{lemma}[Stability w.r.t. codebook $C$ with fixed measure $\bar{\mu}$]
    \label{lem: stab2}
     For a given codebook $C$ of size $k$, its perturbation version $C^\prime$, and EPD $\bar\mu=\lim_{n\rightarrow\infty}\frac{1}{n}\sum_{i=1}^{n}\mu_i$, it holds that $         \|\Phi(\bar{\mu},C)-\Phi(\bar{\mu},C^\prime)\|_1\leq kC_0 \Delta,$ where $C_0$ is a constant determined by $\bar{\mu}$ and $C$.
\end{lemma}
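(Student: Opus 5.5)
We fix the measure $\bar\mu$ and move each codebook point $c_i$ to $c_i'$ with $\|c_i'-c_i\|_2<\Delta$. The proof compares, cell by cell, the mass of the old Voronoi cell with the mass of the new one. By the triangle inequality,
\begin{equation*}
\|\Phi(\bar\mu,C)-\Phi(\bar\mu,C')\|_1=\sum_{i=1}^k|\bar\mu(V(c_i))-\bar\mu(V'(c_i'))|\le\sum_{i=1}^k\bar\mu\bigl(V(c_i)\,\triangle\,V'(c_i')\bigr).
\end{equation*}
It therefore suffices to show that each symmetric difference has $\bar\mu$-mass at most $C_0\Delta$. Summing over the $k$ cells then gives the factor $k$ in the statement.

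\emph{Geometric step.} A point $x$ can change cell membership only if its nearest-center ordering changes. If $x\in V(c_i)\setminus V'(c_i')$, then for some $j$ we have $\|x-c_i\|\le\|x-c_j\|$ but $\|x-c_i'\|>\|x-c_j'\|$. Since each distance moves by less than $\Delta$, we get $\bigl|\|x-c_i\|-\|x-c_j\|\bigr|<2\Delta$. Hence $x$ lies in the slab
\begin{equation*}
B_{ij}(\Delta)=\bigl\{x:\bigl|\|x-c_i\|-\|x-c_j\|\bigr|<2\Delta\bigr\}
\end{equation*}
around the bisector of $c_i,c_j$. For distinct points, $\|x-c_i\|^2-\|x-c_j\|^2=2\langle x,c_j-c_i\rangle+\text{const}$. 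On a bounded region this makes $B_{ij}(\Delta)$ contained in a strip of width $O(\Delta\, R/\|c_i-c_j\|)$, where $R$ bounds the support of $\bar\mu$ and the codebook. For the diagonal cell in Vrep$_d$, the boundary $\{x:\|x-c_j\|=\operatorname{dist}(x,\partial\Omega)\}$ is a parabola, and the diagonal itself is not perturbed. The analogous neighbourhood is again of width $O(\Delta)$, since the distance to $\partial\Omega$ is fixed and only $c_j$ moves.

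\emph{Measure step.} This step is the main obstacle. We must bound $\bar\mu$ of such an $O(\Delta)$-neighbourhood of a bisector by a constant times $\Delta$, which requires regularity of $\bar\mu$. We would invoke the fact (\citealp{chazal2018density}) that the EPD has a density $p$ with respect to Lebesgue measure on $\Omega$ under mild conditions, and assume it is bounded, $\|p\|_\infty<\infty$, on the relevant bounded region. A strip of width $w$ intersected with a ball of radius $R$ has area at most $2Rw$. So $\bar\mu(B_{ij}(\Delta))\le \|p\|_\infty\cdot 2R\cdot c\,\Delta R/\|c_i-c_j\|$.

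The bound for a single cell then follows by taking a union over the neighbours $j$ of cell $i$. To keep the final factor linear in $k$ rather than quadratic, we would absorb the number of neighbours of each cell into $C_0$. Alternatively, we can note that each bisector strip is counted for only two cells, so the total $\sum_i\bar\mu(V(c_i)\triangle V'(c_i'))$ is at most twice the mass of the union of all bisector strips. Either way we take
\begin{equation*}
C_0\propto \|p\|_\infty R^2/\min_{i\ne j}\|c_i-c_j\|,
\end{equation*}
which depends only on $\bar\mu$ (through its density and support) and on $C$ (through its minimal separation). This matches the dependence claimed in the statement.

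For an empirical EPD, whose support is finite, we would interpret the lemma in the limiting sense given by the definition of $\bar\mu$ as $n\to\infty$. Alternatively, for $\Delta$ below the minimal gap between support points and bisectors, the empirical symmetric difference is empty, and the bound holds trivially.
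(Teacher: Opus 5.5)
Your proof is correct, but it takes a different route from the paper. The paper never works with bisector strips. It assumes a minimal separation $\beta=\min_{i\neq j}\|c_i-c_j\|>0$ and invokes an external stability theorem for Voronoi cells to get $d_H(V(c_i),V(c_i'))\le\Delta/\gamma$. That bound holds only for $\Delta<\gamma\beta/6$ and for $\Delta$ small relative to the distance from $C$ to $\partial\Omega'$; here $\gamma$ is built from $\beta$, a covering radius $\rho$ and a modulus of convexity. The paper then turns this into $\mathrm{Vol}(V(c_i)\triangle V(c_i'))\le L_{\max}\Delta/\gamma$ via cell perimeters and multiplies by $M_p=\sup_{\Omega'}p$, so $C_0=M_pL_{\max}/\gamma$.

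You share only the last, bounded-density step. Your key observation is that a point changing cells must satisfy $\bigl|\|x-c_i\|-\|x-c_j\|\bigr|<2\Delta$ for some pair, and so lies in a strip of width $O(R\Delta/\|c_i-c_j\|)$ about their bisector. This replaces both the Hausdorff theorem and the perimeter step; the paper asserts the latter without proof, and it really amounts to a Steiner-type estimate for convex cells. Your route buys the following:
\begin{itemize}
\item it is self-contained;
\item it needs no smallness thresholds on $\Delta$ for point sites;
\item it gives an explicit constant in terms of $\|p\|_\infty$, $R$ and $d_{\min}(C)$;
\item it handles the diagonal cell of Vrep$_d$ explicitly, where the strip constant also depends on $\min_j d(c_j,\partial\Omega)$.
\end{itemize}

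What the paper's cell-wise Hausdorff control buys is a per-cell constant that does not depend on how many sites compete with $c_i$. In your version, the union over $j$ puts up to $k-1$ strips into each cell, so your displayed $C_0\propto\|p\|_\infty R^2/d_{\min}(C)$ actually carries that extra factor. This is harmless, since $C_0$ may depend on $C$. You can also remove it. Because $V(c_i)$ lies on $c_i$'s side of every bisector, $V(c_i)\setminus V'(c_i')$ lies in the inner band of width $O(R\Delta/d_{\min}(C))$ of the truncated convex cell, whose area is at most its perimeter times that width. The same holds for $V'(c_i')\setminus V(c_i)$ with $C$ and $C'$ exchanged, using $d_{\min}(C')\ge d_{\min}(C)-2\Delta$.
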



With Lemma \ref{lemma: lipschitz} and \ref{lem: stab2}, we reach the following Theorem \ref{theorem: stab} about the stability  of Vrep $\hat{\Phi}$, which shows that when the perturbation $\Delta$ is rather small, the total error is Lipschitz continuous w.r.t. $\Delta$.
\begin{theorem}
    \label{theorem: stab}
    For an EPD dataset $\{\bar{\mu}_1,...,\bar{\mu}_m\}$ and $\Delta$-perturbation of every EPD $\bar{\mu}_i$, it holds that $\|\hat{\Phi}(\bar{\mu}_i)-\hat{\Phi}(\bar{\mu}_i^\prime)\|_1\leq L\Delta,$ where $L$ is a constant determined by $m$, $t$, $k$, EPD dataset $\{\bar{\mu}_1,...,\bar{\mu}_m\}$ and codebook set $S$: $L=mt(M^{\max}+kC_0^{\max})$, $C_0^{\max}=\max_{C\in S,j\in[m]}C_0(C,\bar{\mu}_j)$ and  $M^{\max}=\max_{j\in[m]}M(\bar{\mu}_j)$.
\end{theorem}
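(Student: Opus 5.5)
The plan is to reduce the statement to the two lemmas above by working one codebook at a time. By Definition~\ref{def: Vrep}, $\hat{\Phi}(\bar{\mu}_i)$ is the concatenation of the blocks $\Phi(\bar{\mu}_i,C)$ over all $C\in S$. Because the $l_1$-norm of a concatenation is the sum of the $l_1$-norms of its blocks, we get
\begin{equation*}
\|\hat{\Phi}(\bar{\mu}_i)-\hat{\Phi}(\bar{\mu}_i^\prime)\|_1=\sum_{C\in S}\|\Phi(\bar{\mu}_i,C)-\Phi(\bar{\mu}_i^\prime,C^\prime)\|_1 .
\end{equation*}
Here $C^\prime$ is the codebook that corresponds to $C$ after perturbation. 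Every EPD in the dataset is perturbed, and the codebooks are sampled from the supports $\mathcal{S}_{\bar{\mu}_j}$, so their points move by less than $\Delta$. The number of blocks is $|S|=\sum_{j=1}^m|S_j|\le mt$.

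For each block, I apply the decomposition displayed before Lemma~\ref{lemma: lipschitz}, which splits the block difference into a Measure Error and a Codebook Error.

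\textbf{Measure Error.} Lemma~\ref{lemma: lipschitz} with the fixed codebook $C^\prime$ gives the bound $M(\bar{\mu}_i)\,W_1(\bar{\mu}_i,\bar{\mu}_i^\prime)$. Definition~\ref{def: EPD perturbation} gives $W_1\le\Delta$, since the identity matching $r_j\mapsto\hat r_j$ moves every unit of mass less than $\Delta$. So this term is at most $M(\bar{\mu}_i)\Delta\le M^{\max}\Delta$.

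\textbf{Codebook Error.} Lemma~\ref{lem: stab2} with the fixed measure $\bar{\mu}_i$ gives the bound $kC_0(C,\bar{\mu}_i)\Delta\le kC_0^{\max}\Delta$. Taking the maximum over $C\in S$ and $j\in[m]$ makes the constant uniform, because the codebook in a given block may come from any $S_j$.

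Summing over at most $mt$ blocks gives
\begin{equation*}
\|\hat{\Phi}(\bar{\mu}_i)-\hat{\Phi}(\bar{\mu}_i^\prime)\|_1\le mt\,(M^{\max}+kC_0^{\max})\Delta=L\Delta .
\end{equation*}

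The main obstacle is making sure both lemmas can be applied in the forms needed, and that the constants really are uniform. Lemma~\ref{lemma: lipschitz} is used with $C^\prime$ rather than $C$, so I need its constant $M$ to depend only on the measure, which is how the lemma is stated. Lemma~\ref{lem: stab2} requires every codebook point to move less than $\Delta$. I would justify this by noting that each $c\in C$ is a support point of some sampled PD, so its perturbed version satisfies $\|\hat c-c\|_2<\Delta$ by Definition~\ref{def: EPD perturbation}. For Vrep$_d$, the diagonal centre is fixed and does not move at all.

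Since $S$ is finite, all the maxima are attained, so $L$ is finite. It depends only on $m,t,k$, the dataset and $S$, as the statement claims.
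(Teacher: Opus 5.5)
Your proposal is correct and follows essentially the same route as the paper. For each block you split the difference by the triangle inequality into Measure Error (Lemma~\ref{lemma: lipschitz} with codebook $C^\prime$, using $W_1(\bar{\mu}_i,\bar{\mu}_i^\prime)\le\Delta$) and Codebook Error (Lemma~\ref{lem: stab2} with measure $\bar{\mu}_i$), replace the constants by their maxima over $S$ and $[m]$, and sum the $l_1$-norms of the $mt$ concatenated blocks.

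The one point the paper makes explicit that you omit is a smallness condition on $\Delta$. Lemma~\ref{lem: stab2} is only proved for $\Delta$ below codebook-dependent thresholds, namely $0<\Delta<\gamma(C)\beta(C)/6$ and $\Delta\le 8\,d(\bigcup_{i}c_i,\partial\Omega^\prime)\gamma(C)$. The theorem is therefore established for $\Delta$ below the minimum of these thresholds over $C\in S$. That minimum is positive by the same finiteness of $S$ you already invoke for the maxima.
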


We demonstrate the stability of Vrep by showing the change in representation under different levels of perturbation. The change in representation is defined as $\delta_{\bar{\mu}} = \|\hat{\Phi}(\bar{\mu}^\prime)-\hat{\Phi}(\bar{\mu})\|_1 / \|\hat{\Phi}(\bar{\mu})\|_1$.

\begin{wrapfigure}{r}{0.42\textwidth}
    \vspace{-0.8em}
    \centering
    \includegraphics[width=0.41\textwidth]{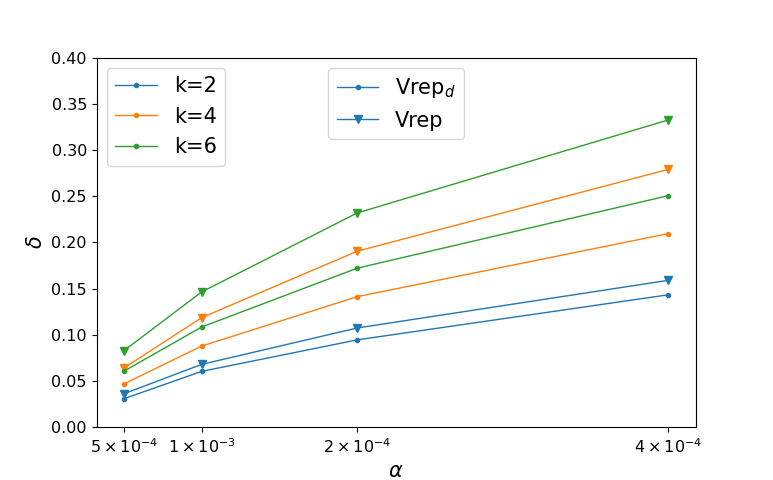}
    \caption{Average change $\delta$ in Vrep and Vrep$_d$ under different EPD perturbation levels $\alpha$.}
    \label{fig:stab}
    \vspace{-0.8em}
\end{wrapfigure}
We use a 3d dynamical system dataset \cite{dong2024persistence,lindstrom2002dynamics}, which describes a discrete food chain model. This dataset contains 9 classes, each class contains 50 point clouds with each point cloud having 2000 points. Each class corresponds to a parameter of the dynamical system. With the dataset of 450 point clouds, for each point cloud $X_i$, we obtain the EPD $\bar{\mu}_i$ by computing 50 sampled PDs from a subset of $X_i$ with size 200. We report the average change of representation $\delta=\frac{1}{450}\sum_{i=1}^{450}\delta_{\bar{\mu}_i}$ under different perturbation levels, where the level $\alpha$ stands for the bandwidth of the Gaussian noise\footnote{Here the noises are from $\mathcal{N}(0,\alpha^{2}I)$. $\Delta$ is the bound to the norm of noise in Definition \ref{def: EPD perturbation}. For $\epsilon\sim\mathcal{N}(0,\alpha^{2}I)$, $\text{E}(\|\epsilon\|)=\alpha\sqrt{\pi/2}$. So $\alpha$ is intuitively linear to $\Delta$ in an expected manner. The change in representation ($\delta$) in Figure \ref{fig:stab} grows slower when $\alpha$ increases. This is consistent with Theorem \ref{theorem: stab}, i.e., the change can be upper bounded by a linear function of $\Delta$.} applied on EPD. The result shown in Figure \ref{fig:stab} is consistent with our analysis that a larger dimension $k$ indicates a less stable vector in Lemma \ref{lem: stab2} and Theorem \ref{theorem: stab}. In addition, Vrep is shown to be less stable than Vrep$_d$. This is because there are less points near the boundary between Voronoi Cell centered at diagonal $\partial\Omega$ and other Voronoi Cells, i.e., the blue dashed curve shown in  Figure \ref{fig: vrep} (b), which separates the noise topological features from the significant ones. The analysis of the effect of $t$ on $\delta$ is provided in Appendix \ref{appendix: t effect}.


\subsection{Dissimilarity of Two EPDs partially controlled by Wasserstein distance under codebook approximation conditions}
\label{sect: WD}
Here we consider two EPDs ($\bar{\mu},\bar{\nu}$) and demonstrate that the most basic component of Vrep (and Vrep$_d$): codebook-based representation $\Phi$ can admit a Wasserstein-related lower bound under approximation conditions.


\begin{theorem}
    \label{thm: sep}
    For any given codebook $C$ with size $k$ and two EPDs ($\bar{\mu},\bar{\nu}$) with the same mass, it holds that $    \|\Phi(\bar{\mu},C)-\Phi(\bar{\nu},C)\|_1 \geq  [W_1(\bar{\mu},\bar{\nu})-W_1(\bar{\mu},\hat{\mu}(c))-W_1(\bar{\nu},\hat{\nu}(c))]/d_{max}(C)$, where $d_{max}(C)=\max_{c_i,c_j\in C}\|c_i-c_j\|_2$ is the diameter of $C$, $W_1$ is  Wasserstein distance \footnote{For two measures $\mu,\nu$ having the same total mass on metric space $(M,\rho)$, Wasserstein distance $W_{p,\rho}$ defined as the infimum of $(\int_{M^2}\rho(x,y)^p)^\frac{1}{p}$ over
all transport plans $\pi$ between $\mu$ and $\nu$, i.e. measures
on $M\times M$ which have for first (resp. second) marginal
$\mu$ (resp. $\nu$ ). When $\rho$ is the Euclidean distance, we write
$W_p$ instead of $W_{p,\rho}$.}, $\hat{\mu}(C)=\sum_{i=1}^{k}\bar{\mu}(V(c_i))\delta_{c_i}$, and $\hat{\nu}(C)=\sum_{i=1}^{k}\bar{\nu}(V(c_i))\delta_{c_i}$.
\end{theorem}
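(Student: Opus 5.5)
Because $W_1$ is a metric on measures of equal mass, the triangle inequality gives
\[
W_1(\bar{\mu},\bar{\nu}) \le W_1(\bar{\mu},\hat{\mu}(C)) + W_1(\hat{\mu}(C),\hat{\nu}(C)) + W_1(\hat{\nu}(C),\bar{\nu}).
\]
Rearranging, the claimed bound follows once we show the middle term satisfies
\[
W_1(\hat{\mu}(C),\hat{\nu}(C)) \le d_{max}(C)\,\|\Phi(\bar{\mu},C)-\Phi(\bar{\nu},C)\|_1 .
\]
The plan is therefore: (i) check that the quantized measures are legitimate and have equal mass; (ii) apply the triangle inequality above; (iii) bound the $W_1$ distance between two discrete measures on the common support $C$ by the $\ell_1$ distance of their weight vectors times the diameter of $C$.

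For (i), note that the Voronoi cells $V(c_1),\dots,V(c_k)$ cover $\Omega$ and overlap only on boundaries. Ties are broken by a fixed rule so the cells form a measurable partition; if the boundaries carry positive mass this rule is needed to make $\Phi$ well defined at all. It follows that $\hat{\mu}(C)(\Omega)=\bar{\mu}(\Omega)=\bar{\nu}(\Omega)=\hat{\nu}(C)(\Omega)$, so all four measures have the same mass and every $W_1$ term in the statement is defined.

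For (iii), write $a_i=\bar{\mu}(V(c_i))$ and $b_i=\bar{\nu}(V(c_i))$. Build an explicit transport plan from $\hat{\mu}(C)$ to $\hat{\nu}(C)$ in three parts:
\begin{itemize}
\item keep mass $\min(a_i,b_i)$ fixed at each $c_i$, at zero cost;
\item move the excesses $(a_i-b_i)_+$ onto the deficits $(b_j-a_j)_+$ by any coupling, for instance the product coupling normalized by the total excess $E$;
\item check the totals: equal mass gives $\sum_i (a_i-b_i)_+=\sum_j (b_j-a_j)_+=E=\tfrac12\|a-b\|_1$.
\end{itemize}
Each unit of moved mass travels at most $\|c_i-c_j\|_2\le d_{max}(C)$, so the plan costs at most $d_{max}(C)\cdot\tfrac12\|a-b\|_1$. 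This gives the needed inequality even with an extra factor $1/2$ to spare.

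Step (iii) is the only non-routine step, and it is short once the coupling is written down. The main subtlety is the $\text{Vrep}_d$ variant, where $c_k=\partial\Omega$ is a set rather than a point. There $\hat{\mu}(C)$ must be read with $\delta_{c_k}$ placed at a representative point of $\partial\Omega$, or with the metric quotiented by the diagonal, and $d_{max}(C)$ must be interpreted as the corresponding set diameter.

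I would state the theorem for point codebooks and remark that the same coupling argument works verbatim under the diagonal convention, provided distances to $\partial\Omega$ are bounded by $d_{max}(C)$. The degenerate case $d_{max}(C)=0$, i.e.\ $k=1$, is trivial: then $\hat{\mu}(C)=\hat{\nu}(C)$ and the right-hand side is read as $-\infty$ or excluded.
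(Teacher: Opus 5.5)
Your proposal is correct and follows the paper's proof essentially step for step: the triangle inequality routed through $\hat{\mu}(C)$ and $\hat{\nu}(C)$, then an explicit plan that keeps $\min(a_i,b_i)$ at each $c_i$ and moves the excess a distance of at most $d_{max}(C)$. Your accounting is slightly sharper than the paper's, since the moved mass is exactly $\tfrac12\|a-b\|_1$, which spares a factor of $2$; your remarks on tie-breaking at Voronoi boundaries and on the diagonal cell address points the paper leaves implicit.
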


The bound is informative only when the Wasserstein separation is large relative to the codebook approximation errors. Therefore, this result should be interpreted as a conditional consistency statement for well-resolved codebooks, not as a general guarantee that Vrep distances monotonically preserve Wasserstein distances\footnote{When $W_1(\bar{\mu},\bar{\nu})$ is small, the upper bound in Inequality \ref{equation: upper and lower bound} would be more useful, indicating that a smaller $W_1(\bar{\mu},\bar{\nu})$ leads to a smaller $\|\Phi(\bar{\mu},C)-\Phi(\bar{\nu},C)\|_1$.}. Appendix \ref{appendix: wass} provides a controlled synthetic study that directly compares representation distances with Wasserstein distances. The results show that Vrep/Vrep$_d$ are effective when variation is dominated by coarse mass displacement, while smooth representations can be preferable for fine within-cell shifts, supporting the trade-off interpretation of our method. A demo analysis on the Wasserstein separation is given in Appendix \ref{appendix: example distribution}.

\textbf{These results do not imply that Vrep dominates PI, PS, or other vectorizations. They show that, for a fixed codebook satisfying separation conditions, the Voronoi histogram changes Lipschitz-continuously under perturbations of the normalized EPD and can preserve certain distributional differences at the cell level.}

\subsection{Time Complexity}
\label{sect: time}

For an EPD dataset of size $m$, $\{\bar{\mu}_1,...,\bar{\mu}_m\}$, the size of codebook set $S$ is $mtk$, where $k$ is the codebook size and $t$ is the number of codebooks sampled from each EPD. For a particular EPD $\bar{\mu}_i$, the Euclidean distance between each point in $\mathcal{S}_{\bar{\mu}_i}$ and each point in codebook set $S$ needs to be computed for the construction of Voronoi Diagram. So for an EPD $\bar{\mu}$, the time complexity of building Vrep is $O(|\mathcal{S}_{\bar{\mu}}|mtk)$, where $\mathcal{S}_{\bar{\mu}}$ is the support of $\bar{\mu}$.

\textbf{Speed up by subsampling on $\mathcal{S}_{\bar{\mu}}$.} Since we consider $\bar{\mu}$ as a distribution, we can reduce its support size to speed up the process of building a vector representation.  Quantization \cite{divol2021estimation} or simple subsampling can be utilized to reduce the support size. It is more efficient to use subsampling  because Quantization involves an expensive optimization technique. We use subsampling to reduce $\mathcal{S}_{\bar{\mu}}$ to a fixed size and the resultant time complexity becomes $O(mtk)$ for building Vrep for one EPD. The time complexity of building Vrep's for the entire dataset is $O(m^2tk)$. We can further reduce time complexity to $O(mtk)$ by using a random subset of fixed size of codebook set $S$.

\section{Experiments}
\label{sect: exp}

 We compare Vrep with existing representation methods of EPD: Average Persistence Image (PI), Persistence Silhouettes (PS) and Persistence Landscape (PL) in point cloud classification task. We calculate the EPD for each point cloud, convert it into a vector, and then feed it into a Random Forest classifier \cite{breiman2001random}.  We follow the setting in \cite{royer2021atol} and use Random Forests as a ready-to-use tool. Comparable performances can be obtained from using a neural network classifier or other classifiers, depending on the problem. This is a light choice that requires no particular infrastructure or tuning efforts that would produce overly design-dependent results. In addition, we use PointNet \cite{qi2017pointnet}, an end-to-end network for point cloud input, to measure the difficulty of the classification task. Note that PointNet is not a fair contender for the compared methods because the feature extraction in PointNet is a supervised method but others are unsupervised. Code is available in the Supplementary Material.

We also compare Vrep with two more conventional vectorization methods, Betti and Euler Curve in Appendix \ref{appendix: Betti and Euler Curve}. For kernel methods, we compare Vrep with Persistence Weighted Gaussian Kernel and Sliced Wasserstein Kernel in Appendix \ref{appendix: PWGK}, where the classifier is Support Vector Machine \cite{suthaharan2016support}.


We do not compare with supervised vectorization methods like PersLay \cite{perslay} since we aim to show that an unsupervised method (Vrep) with easy classifier like Random Forest can have a competitive performance in comparison with end-to-end supervised PointNet, with lower runtime in our reported settings. \textbf{The unsupervised nature also enables Vrep to be useful in unsupervised task like dimensionality reduction, as shown in Appendix \ref{appendix: DR}.}

\textbf{Datasets}. For datasets with significant topological features, we choose Protein \cite{chandonia2022scope,fox2014scope}, CAD \cite{kim2020large} and Time-Delaying embedding \cite{seversky2016time} (point cloud) derived from time series data \cite{8894743}. For the computation of EPD, each sampled PD is obtained through Rips filtration \cite{vr1995} for the Protein and time series data. CAD uses a more efficient alpha filtration \cite{edelsbrunner1993union} given its large number of points in each point cloud. The detailed hyperparameter setting and description of datasets are provided in Appendix \ref{append: exp setting}.

\textbf{Experiment Settings.} We report the
mean classification accuracy of 10 random splits where $80\%$ of the dataset is for training and the rest is for testing. 3-fold cross validation on the training set is
used to select the best hyperparameters for each approach. \textbf{For Vrep and Vrep$_d$, the test set's representation is constructed based on the codebook set $S$ from the training set.} The results are shown in Table \ref{tab: exp rf}, where $\text{CAD}_{0.01}$ and $\text{CAD}_{0.05}$ stands for the CAD dataset with Gaussian noise of bandwidth being 0.01 and 0.05. The detailed settings and summary of datasets are provided in Appendix \ref{append: exp setting}.

Vrep and Vrep$_d$ outperform PI, PS and PL over all the datasets. Vrep often produces higher accuracy than Vrep$_d$. But from the result of CAD, CAD$_{0.01}$ and CAD$_{0.05}$, we demonstrate that Vrep$_d$ is more stable than Vrep. This is consistent with our stability result shown in Figure \ref{fig:stab}. \textbf{The hyperparameter sensitivity results of Vrep and Vrep$_d$ are shown in Appendix \ref{appendix: sens}}. We fix the value of $t$ and report the performance under different $k$s. Vrep (Vrep$_d$) is rather stable w.r.t. the choice of $k$ when the codebook choice is default or persistence-weighted.

\begin{table}[t]
    \centering
            \caption{Accuracy of Vrep, Vrep$_d$, Persistence Image (PI), Persistence Silhouettes (PS), Persistence Landscape (PL). PointNet \cite{qi2017pointnet} is an end-to-end network designed for point cloud classification. It serves as a means to assess the difficulty of classifying point clouds. The default codebook choice is used in Vrep and Vrep$_d$. A direct computation of PD on the CAD dataset is
not feasible, as most point clouds contain too many points. Since the Protein dataset has distance matrix only and PointNet takes point cloud only as input, we use Multidimensional Scaling \cite{borg2007modern} to transform the distance matrix into a 3d point cloud. Boldface is for the highest accuracy on each dataset (excluding PointNet).
}
\label{tab: exp rf}
\vspace{8pt}
    \resizebox{0.94\textwidth}{!}
    {
    \begin{tabular}{|c|c|c|c|c|c|c|c|c||c|}
    \hline
        ~ & \text{Vrep} & $\text{Vrep}_{d}$ & \multicolumn{2}{c|}{PI} & \multicolumn{2}{c|}{PS} & \multicolumn{2}{c||}{PL} & PointNet\\ \cline{2-10}
        & \multicolumn{2}{c|}{EPD}  & PD & EPD  & PD & EPD & PD & EPD & Point Cloud \\ \hline
        Protein & \textbf{0.985}  & \textbf{0.985}  & 0.955  & 0.790  & 0.880 & 0.850 & 0.970 & 0.950 & 0.740\\ 
        CAD & \textbf{0.931}  & 0.900  & $\backslash$ & 0.911  & $\backslash$ & 0.897  & $\backslash$ & 0.901 & 0.950  \\ 
        CAD$_{0.01}$ & \textbf{0.912}  & 0.900  & $\backslash$ & 0.911  & $\backslash$ & 0.881 & $\backslash$ & 0.900 & 0.946\\ 
        CAD$_{0.05}$ & \textbf{0.923}  & 0.904  & $\backslash$ & 0.900 & $\backslash$ & 0.885 & $\backslash$ & 0.900  & 0.950 \\ 
        Beef & \textbf{0.500}  & 0.483  & 0.367 & 0.358  & 0.283 & 0.333 & 0.317 &0.317 & 0.550  \\ 
        BirdChicken & \textbf{0.963}  & \textbf{0.963}  & 0.763  & 0.838  & 0.875 & 0.875 & 0.875 & 0.875 & 1.000 \\ 
        DPTW & \textbf{0.776}  & 0.767  & 0.640  & 0.742  & 0.690 & 0.704 & 0.650 & 0.650 & 0.784   \\ 
        Earthquakes & \textbf{0.825}  & 0.824  & 0.808 & 0.801  & 0.794 & 0.781 & 0.805  & 0.805&0.805  \\ 
        ECG200 & \textbf{0.850}  & 0.823  & 0.685  & 0.808  & 0.728 & 0.795 & 0.725 & 0.725  & 0.847   \\ \hline
    \end{tabular}}
\end{table}

On these topology-sensitive datasets, Vrep and Vrep$_d$ achieve higher mean accuracy than PI, PS, and PL under the Random Forest pipeline. This result can be interpreted as evidence that adaptive mass aggregation can be effective in these settings. \textit{The comparison with PWGK/SWK in Appendix \ref{appendix: PWGK} further shows that smooth kernel methods can be preferable on several datasets, at substantially higher runtime.}

Compared with PointNet, although Vrep produces comparable or lower accuracy, Vrep can be used in unsupervised task like dimensionality reduction, shown in Section \ref{appendix: DR}, while PointNet can not  be applied to unsupervised tasks due to its requirement of label for the end-to-end training. The time cost comparison with PointNet is provided in the Appendix E.5.

Note that \textbf{Vrep is not uniformly best.} In Appendix \ref{appendix: PWGK}, PWGK performs better on several CAD and DPTW settings, suggesting that smooth kernel methods can be preferable when the relevant topological variation is better captured by continuous similarity rather than coarse mass aggregation.

\begin{table*}[h]
    \centering
            \caption{Accuracy of Vrep and Vrep$_d$ under different codebook choices: default, persistence-weighted and uniform. The first two choices are EPD-relevant while the last one is EPD-irrelevant. Boldface is for the highest accuracy on each dataset.}
    \setlength{\tabcolsep}{2mm}
    {\small
    \begin{tabular}{|c|c|c|c|c|c|c|}
    \hline
    & \multicolumn{3}{c|}{Vrep} & \multicolumn{3}{c|}{Vrep$_d$} \\\cline{2-7}
         & default & persistence & uniform & default & persistence & uniform  \\ \hline
        Protein & \textbf{0.985} & 0.920 & 0.780 & \textbf{0.985} & 0.930 & 0.665\\
        CAD & 0.931 & \textbf{0.969} & 0.935 & 0.900 & {0.961} & 0.877\\
        CAD$_{0.01}$ & 0.912 & \textbf{0.946} & 0.904 & 0.900 & {0.935} & 0.900\\
        CAD$_{0.05}$ & 0.923 & \textbf{0.942} & 0.908 &0.904 & {0.931} & 0.896\\
        Beef & 0.500 & \textbf{0.583} & 0.517 & 0.483 & {0.558} & 0.467 \\
        BirdChicken & \textbf{0.963} & 0.938 & 0.925 & \textbf{0.963} & 0.950 & 0.950 \\
        DPTW & \textbf{0.776} & 0.756 & 0.757 & {0.767} & 0.756 & 0.735\\
        Earthquakes & \textbf{0.825} & 0.814 & 0.823 & {0.824} & 0.812 & 0.818\\
            ECG200 & \textbf{0.850}& 0.828 & 0.830 & 0.823 & 0.81 & {0.835}\\\hline
    \end{tabular}}
    \label{tab: ablation study}
\end{table*}

\subsection{Study on the codebook choice}
\label{sect: ablation study}
We conduct a study on the codebook choice for Vrep and Vrep$_d$. We consider the three codebook choices of the default, persistence-weighted and uniform, stated in the Voronoi Diagram-based Representation Section. As shown in Table \ref{tab: ablation study}, the persistence-weighted codebook outperforms the default one on CAD, CAD$_{0.01}$, CAD$_{0.05}$ and Beef, for both Vrep and Vrep$_d$. The persistence-weighted codebook performs better in the noisy case because the codebook is mainly sampled on significant topological features, which reduces the effect of noisy features near the diagonal.


The highest accuracy is obtained through either the default or persistence-weighted codebook which consistently outperforms the EPD-irrelevant uniform codebook. Further experiments comparing using Quantization results \citep{divol2021estimation} and ATOL-style centers \citep{royer2021atol} as codebooks are given in Appendix \ref{appendix: codebook construction}.

\subsection{Scaleup Test}

There are two components in an EPD dataset that affect the computation effort for vectorization: the number of EPDs in the dataset and the number $n$ of sampled PDs in each EPD. 

For the EPD dataset size, as shown in Figure \ref{fig: scaleup3} (a), Vrep (Vrep$_d$) is quadratic to the dataset size because as shown in Definition \ref{def: Vrep}, a larger dataset leads to a larger codebook set $S$. PI, PL and PS all have linear complexity. For the actual time cost, Vrep (Vrep$_d$) is faster than PL and PS, but slower than PI.

\begin{figure}[h!]
\centering
\includegraphics[width=0.4\textwidth]{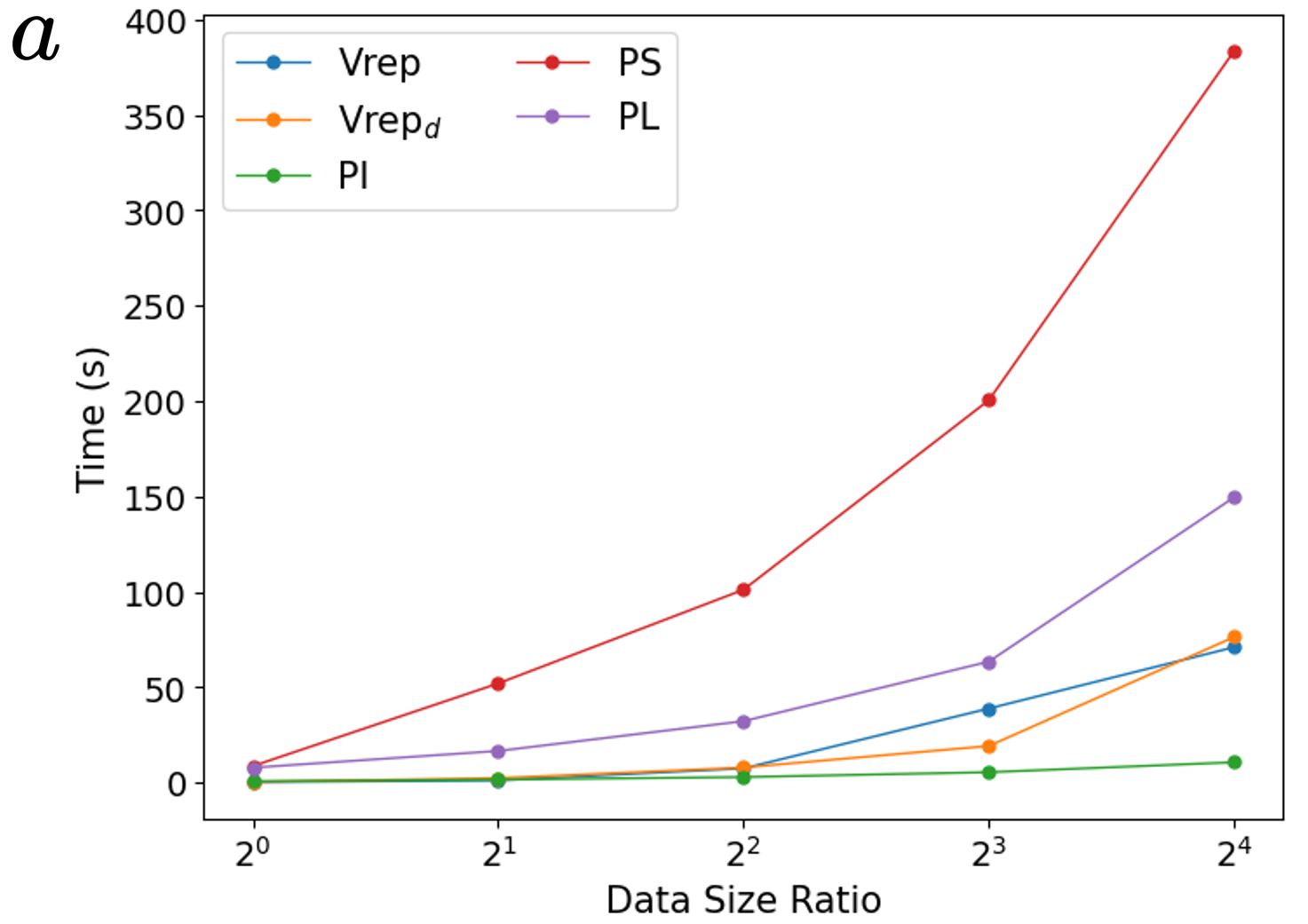}
\includegraphics[width=0.4\textwidth]{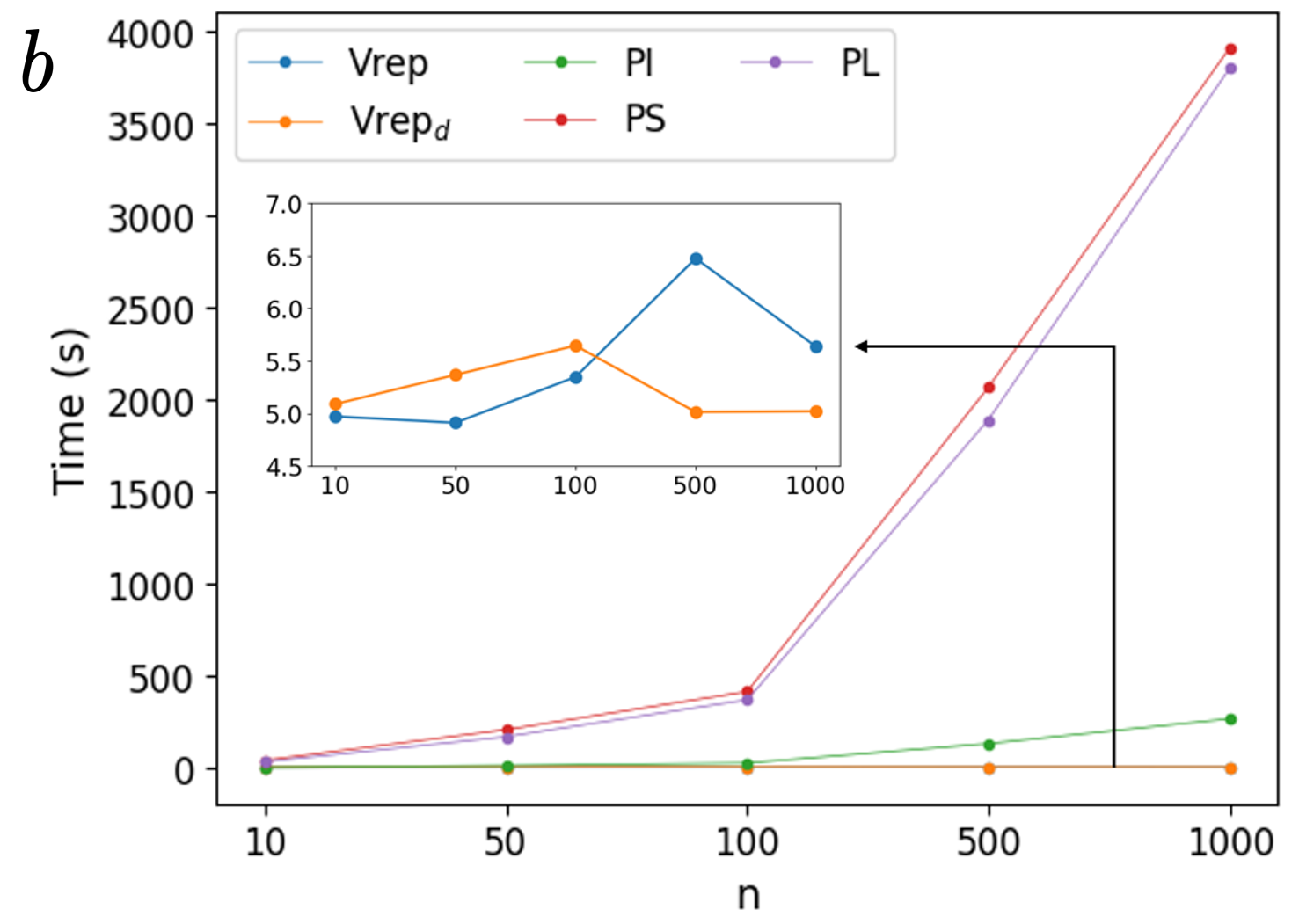}
\caption{(a) Scaleup test on the dataset size, where the dataset size is 126 at data size ratio =1. The number $n$ of sampled PD in each EPD is set to be 10. The actual time cost is reported here. Corresponding time ratio is reported in Appendix E.6. (b) Scaleup test on CAD w.r.t. the number $n$ of sampled PDs in EPD. The overlapping area between Vrep and Vrep$_d$ is enlarged.
}
\label{fig: scaleup3}
\end{figure}

We perform a scale-up test w.r.t. the number $n$ of sampled PDs in each EPD in Figure \ref{fig: scaleup3} (b). The time cost of PI, PS and PL grows as $n$ increases because these methods need to be applied on each sampled PD. Vrep and Vrep$_d$ remain almost unchanged over different values of $n$ because the subsampling technique ensures a fixed size of the EPD support $\mathcal{S}_{\bar{\mu}}$.

\section{Conclusion}

We proposed Voronoi histograms as an adaptive vectorization of Expected Persistence Diagrams. Instead of applying a fixed smooth point transformation to each sampled persistence diagram, Vrep aggregates normalized empirical EPD mass over data-dependent Voronoi cells. This gives a different approximation bias from PI, PS, and PL: Vrep can efficiently summarize coarse mass allocation with respect to the number of sampled PDs, but it remains a lossy finite-dimensional representation.

Our analysis establishes stability under the stated perturbation, normalization, and codebook conditions, and gives a conditional Wasserstein-related bound for well-resolved codebooks. These results should not be interpreted as a general monotonic preservation guarantee or as dominance over existing vectorizations. Empirically, Vrep is competitive on topology-sensitive classification datasets and offers favorable scaling with the number of sampled PDs, while kernel methods such as PWGK and SWK can perform better in some settings at higher computational cost. 

Limitation is discussed in Appendix \ref{appendix: limitation}.





{
\small
\bibliography{epdv}

@article{divol2021estimation,
  title={Estimation and quantization of expected persistence diagrams},
  author={Divol, Vincent and Lacombe, Th{\'e}o},
  journal={International Conference On Machine Learning},
  year={2021}
}

@inproceedings{reem2011geometric,
  title={The geometric stability of Voronoi diagrams with respect to small changes of the sites},
  author={Reem, Daniel},
  booktitle={Proceedings of Annual Symposium on Computational Geometry},
  pages={254--263},
  year={2011}
}

@article{beer1974hausdorff,
  title={The Hausdorff metric and convergence in measure.},
  author={Beer, Gerald A},
  journal={Michigan Mathematical Journal},
  volume={20},
  number={4},
  pages={63--64},
  year={1974}
}

@article{wu2024estimation,
  title={On the estimation of persistence intensity functions and linear representations of persistence diagrams},
  author={Wu, Weichen and Kim, Jisu and Rinaldo, Alessandro},
  journal={International Conference on Artificial Intelligence and Statistics},
  year={2024}
}

@article{van2008visualizing,
  title={Visualizing data using t-SNE.},
  author={Van der Maaten, Laurens and Hinton, Geoffrey},
  journal={Journal of Machine Learning Research},
  volume={9},
  number={11},
  year={2008}
}

@article{wasserman2016topological,
  title={Topological data analysis},
  author={Wasserman, Larry},
  journal={Annual Review of Statistics and Its Application},
  volume={5},
  pages={501--532},
  year={2018},
  publisher={Annual Reviews}
}

@article{Reininghaus2015ASM,
  title={A stable multi-scale kernel for topological machine learning},
  author={Jan Reininghaus and Stefan Huber and Ulrich Bauer and Roland Kwitt},
  journal={IEEE Conference on Computer Vision and Pattern Recognition},
  year={2015}
}

@article{carriere2017sliced,
author = {Carri\`{e}re, Mathieu and Cuturi, Marco and Oudot, Steve},
title = {Sliced Wasserstein Kernel for Persistence Diagrams},
year = {2017},
journal = {International Conference on Machine Learning},
}

@article{kusano2017kernel,
  title={Kernel method for persistence diagrams via kernel embedding and weight factor},
  author={Kusano, Genki and Fukumizu, Kenji and Hiraoka, Yasuaki},
  journal={Journal of Machine Learning Research},
  volume={18},
  number={1},
  pages={6947--6987},
  year={2017},
}

@article{le2018persistence,
  title={Persistence fisher kernel: A riemannian manifold kernel for persistence diagrams},
  author={Le, Tam and Yamada, Makoto},
  journal={Advances in Neural Information Processing Systems},
  year={2018}
}

@article{polanco2019adaptive,
  author={Polanco, Luis and Perea, Jose A.},
  journal={International Conference On Machine Learning And Applications}, 
  title={Adaptive Template Systems: Data-Driven Feature Selection for Learning with Persistence Diagrams}, 
  year={2019}
  }

@article{Bubenik2020ThePL,
  title={The Persistence Landscape and Some of Its Properties},
  author={Peter Bubenik},
  journal={Topological Data Analysis},
  year={2020}
}

@article{kusano2016persistence,
  title={Persistence weighted Gaussian kernel for topological data analysis},
  author={Kusano, Genki and Hiraoka, Yasuaki and Fukumizu, Kenji},
  journal={International Conference on Machine Learning},
  year={2016}
}

@article{meng2020weighted,
  title={Weighted persistent homology for biomolecular data analysis},
  author={Meng, Zhenyu and Anand, D Vijay and Lu, Yunpeng and Wu, Jie and Xia, Kelin},
  journal={Scientific Reports},
  volume={10},
  number={1},
  pages={1--15},
  year={2020},
}

@article{townsend2020representation,
  title={Representation of molecular structures with persistent homology for machine learning applications in chemistry},
  author={Townsend, Jacob and Micucci, Cassie Putman and Hymel, John H and Maroulas, Vasileios and Vogiatzis, Konstantinos D},
  journal={Nature Communications},
  volume={11},
  number={1},
  pages={1--9},
  year={2020},
}

@article{2017persistenceimage,
  title={Persistence images: A stable vector representation of persistent homology},
  author={Adams, Henry and Emerson, Tegan and Kirby, Michael and Neville, Rachel and Peterson, Chris and Shipman, Patrick and Chepushtanova, Sofya and Hanson, Eric and Motta, Francis and Ziegelmeier, Lori},
  journal={Journal of Machine Learning Research},
  volume={18},
  year={2017}
}

@article{hofer2019learning,
  author  = {Christoph D. Hofer and Roland Kwitt and Marc Niethammer},
  title   = {Learning Representations of Persistence Barcodes},
  journal = {Journal of Machine Learning Research},
  year    = {2019},
  volume  = {20},
  number  = {126},
  pages   = {1--45},
}

@inproceedings{zomorodian2004computing,
author = {Zomorodian, Afra and Carlsson, Gunnar},
title = {Computing Persistent Homology},
year = {2004},
booktitle = {Proceedings of Annual Symposium on Computational Geometry},
pages = {347–356},
numpages = {10},
}

@article{vr1995,
  title={On the Vietoris-Rips complexes and a cohomology theory for metric spaces},
  author={Hausmann, Jean-Claude and others},
  journal={Annals of Mathematics Studies},
  volume={138},
  pages={175--188},
  year={1995}
}

@INPROCEEDINGS{edelsbrunner2000topological,
  author={Edelsbrunner, H. and Letscher, D. and Zomorodian, A.},
  booktitle={Proceedings of Annual Symposium on Foundations of Computer Science}, 
  title={Topological persistence and simplification}, 
  year={2000}
  }

@article{bubenik2015statistical,
author = {Bubenik, P.},
year = {2015},
month = {01},
pages = {77-102},
title = {Statistical topological data analysis using persistence landscapes},
volume = {16},
journal = {Journal of Machine Learning Research}
}

@article{chazal2021introduction,
  title={An Introduction to Topological Data Analysis: Fundamental and Practical Aspects for Data Scientists},
  author={Chazal, Fr{\'e}d{\'e}ric and Michel, Bertrand},
  journal={Frontiers in Artificial Intelligence},
  volume={4},
  pages={667963},
  year={2021},
}

@article{liu2022dowker,
Author = {Liu, Xiang and Feng, Huitao and Wu, Jie and Xia, Kelin},
Journal = {PLOS Computational Biology},
Number = {4},
Pages = {1 - 17},
Title = {Dowker complex based machine learning (DCML) models for protein-ligand binding affinity prediction.},
Volume = {18},
Year = {2022},
}

@article{perslay,
  title={Perslay: A neural network layer for persistence diagrams and new graph topological signatures},
  author={Carri{\`e}re, Mathieu and Chazal, Fr{\'e}d{\'e}ric and Ike, Yuichi and Lacombe, Th{\'e}o and Royer, Martin and Umeda, Yuhei},
  journal={International Conference on Artificial Intelligence and Statistics},
  year={2020}
}

@article{chevyrev2018persistence,
  title={Persistence paths and signature features in topological data analysis},
  author={Chevyrev, Ilya and Nanda, Vidit and Oberhauser, Harald},
  journal={IEEE Transactions on Pattern Analysis and Machine Intelligence },
  volume={42},
  number={1},
  pages={192--202},
  year={2018}
}

@article{salnikov2018simplicial,
  title={Simplicial complexes and complex systems},
  author={Salnikov, Vsevolod and Cassese, Daniele and Lambiotte, Renaud},
  journal={European Journal of Physics},
  volume={40},
  number={1},
  pages={014001},
  year={2018},
  publisher={IOP Publishing}
}

@article{xia2018multiscale,
  title={Multiscale persistent functions for biomolecular structure characterization},
  author={Xia, Kelin and Li, Zhiming and Mu, Lin},
  journal={Bulletin of mathematical biology},
  volume={80},
  pages={1--31},
  year={2018},
  publisher={Springer}
}

@article{xia2015multidimensional,
  title={Multidimensional persistence in biomolecular data},
  author={Xia, Kelin and Wei, Guo-Wei},
  journal={Journal of Computational Chemistry},
  volume={36},
  number={20},
  pages={1502--1520},
  year={2015},
  publisher={Wiley Online Library}
}

@article{lee2017quantifying,
  title={Quantifying similarity of pore-geometry in nanoporous materials},
  author={Lee, Yongjin and Barthel, Senja D and D{\l}otko, Pawe{\l} and Moosavi, S Mohamad and Hess, Kathryn and Smit, Berend},
  journal={Nature Communications},
  volume={8},
  number={1},
  pages={1--8},
  year={2017},
  publisher={Nature Publishing Group}
}

@article{cao2022approximating,
  title={Approximating persistent homology for large datasets},
  author={Cao, Yueqi and Monod, Anthea},
  journal={arXiv preprint arXiv:2204.09155},
  year={2022}
}

@article{chazal2015subsampling,
  title={Subsampling methods for persistent homology},
  author={Chazal, Fr{\'e}d{\'e}ric and Fasy, Brittany and Lecci, Fabrizio and Michel, Bertrand and Rinaldo, Alessandro and Wasserman, Larry},
  journal={International Conference on Machine Learning},
  year={2015}
}

@inproceedings{chazal2014stochastic,
  title={Stochastic convergence of persistence landscapes and silhouettes},
  author={Chazal, Fr{\'e}d{\'e}ric and Fasy, Brittany Terese and Lecci, Fabrizio and Rinaldo, Alessandro and Wasserman, Larry},
  booktitle={Proceedings of Annual Symposium on Computational Geometry},
  pages={474--483},
  year={2014}
}

@article{kim2020large,
  title={A Large-Scale Annotated Mechanical Components Benchmark for Classification and Retrieval Tasks with Deep Neural Networks},
  author={Kim, Sangpil and gun Chi, Hyung and Hu, Xiao and Huang, Qixing and Ramani, Karthik},
  journal={European Conference on Computer Vision},
  year={2020}
}

@ARTICLE{8894743,
  author={Dau, Hoang Anh and Bagnall, Anthony and Kamgar, Kaveh and Yeh, Chin-Chia Michael and Zhu, Yan and Gharghabi, Shaghayegh and Ratanamahatana, Chotirat Ann and Keogh, Eamonn},
  journal={IEEE/CAA Journal of Automatica Sinica}, 
  title={The UCR time series archive}, 
  year={2019},
  volume={6},
  number={6},
  pages={1293-1305}}

@article{fox2014scope,
  title={SCOPe: Structural Classification of Proteins—extended, integrating SCOP and ASTRAL data and classification of new structures},
  author={Fox, Naomi K and Brenner, Steven E and Chandonia, John-Marc},
  journal={Nucleic Acids Research},
  volume={42},
  number={D1},
  pages={D304--D309},
  year={2014},
  publisher={Oxford University Press}
}

@article{chandonia2022scope,
  title={SCOPe: improvements to the structural classification of proteins--extended database to facilitate variant interpretation and machine learning},
  author={Chandonia, John-Marc and Guan, Lindsey and Lin, Shiangyi and Yu, Changhua and Fox, Naomi K and Brenner, Steven E},
  journal={Nucleic Acids Research},
  volume={50},
  number={D1},
  pages={D553--D559},
  year={2022},
  publisher={Oxford University Press}
}

@article{nishikawa2024adaptive,
  title={Adaptive topological feature via persistent homology: filtration learning for point clouds},
  author={Nishikawa, Naoki and Ike, Yuichi and Yamanishi, Kenji},
  journal={Advances in Neural Information Processing Systems},
  year={2024}
}

@article{zaheer2017deep,
  title={Deep sets},
  author={Zaheer, Manzil and Kottur, Satwik and Ravanbakhsh, Siamak and Poczos, Barnabas and Salakhutdinov, Russ R and Smola, Alexander J},
  journal={Advances in Neural Information Processing Systems},
  year={2017}
}

@article{hofer2017deep,
  title={Deep learning with topological signatures},
  author={Hofer, Christoph and Kwitt, Roland and Niethammer, Marc and Uhl, Andreas},
  journal={Advances in Neural Information Processing Systems},
  year={2017}
}

@article{hofer2020graph,
  title={Graph filtration learning},
  author={Hofer, Christoph and Graf, Florian and Rieck, Bastian and Niethammer, Marc and Kwitt, Roland},
  journal={International Conference on Machine Learning},
  year={2020}
}

@article{zhang2022gefl,
  title={GEFL: extended filtration learning for graph classification},
  author={Zhang, Simon and Mukherjee, Soham and Dey, Tamal K},
  journal={Learning on Graphs Conference},
  year={2022}
}

@article{horntopological,
  title={Topological Graph Neural Networks},
  author={Horn, Max and De Brouwer, Edward and Moor, Michael and Moreau, Yves and Rieck, Bastian and Borgwardt, Karsten},
  journal={International Conference on Learning Representations},
  year={2021}
}

@article{chintakunta2015entropy,
  title={An entropy-based persistence barcode},
  author={Chintakunta, Harish and Gentimis, Thanos and Gonzalez-Diaz, Rocio and Jimenez, Maria-Jose and Krim, Hamid},
  journal={Pattern Recognition},
  volume={48},
  number={2},
  pages={391--401},
  year={2015},
  publisher={Elsevier}
}

@article{rucco2016characterisation,
  title={Characterisation of the idiotypic immune network through persistent entropy},
  author={Rucco, Matteo and Castiglione, Filippo and Merelli, Emanuela and Pettini, Marco},
  journal={European Conference on Complex Systems},
  year={2016}
}

@article{atienza2020stability,
  title={On the stability of persistent entropy and new summary functions for topological data analysis},
  author={Atienza, Nieves and Gonz{\'a}lez-D{\'\i}az, Roc{\'\i}o and Soriano-Trigueros, Manuel},
  journal={Pattern Recognition},
  volume={107},
  pages={107509},
  year={2020},
  publisher={Elsevier}
}

@article{asaad2022persistent,
  title={Persistent homology for breast tumor classification using mammogram scans},
  author={Asaad, Aras and Ali, Dashti and Majeed, Taban and Rashid, Rasber},
  journal={Mathematics},
  volume={10},
  number={21},
  pages={4039},
  year={2022},
  publisher={MDPI}
}

@article{perea2023approximating,
  title={Approximating continuous functions on persistence diagrams using template functions},
  author={Perea, Jose A and Munch, Elizabeth and Khasawneh, Firas A},
  journal={Foundations of Computational Mathematics},
  volume={23},
  number={4},
  pages={1215--1272},
  year={2023},
  publisher={Springer}
}

@article{kalivsnik2019tropical,
  title={Tropical coordinates on the space of persistence barcodes},
  author={Kali{\v{s}}nik, Sara},
  journal={Foundations of Computational Mathematics},
  volume={19},
  number={1},
  pages={101--129},
  year={2019},
  publisher={Springer}
}

@article{ferri1999representing,
  title={Representing size functions by complex polynomials},
  author={Ferri, Massimo and Landi, Claudia},
  journal={Proc. Math. Met. in Pattern Recognition},
  volume={9},
  pages={16--19},
  year={1999}
}

@incollection{di2015comparing,
  title={Comparing Persistence Diagrams Through Complex Vectors},
  author={Di Fabio, Barbara and Ferri, Massimo},
  booktitle={Image Analysis and Processing},
  pages={294--305},
year = {2015},
publisher={Springer}
}

@article{chung2022persistence,
  title={Persistence curves: A canonical framework for summarizing persistence diagrams},
  author={Chung, Yu-Min and Lawson, Austin},
  journal={Advances in Computational Mathematics},
  volume={48},
  number={1},
  pages={6},
  year={2022},
  publisher={Springer}
}

@article{bubenik2017persistence,
  title={A persistence landscapes toolbox for topological statistics},
  author={Bubenik, Peter and D{\l}otko, Pawe{\l}},
  journal={Journal of Symbolic Computation},
  volume={78},
  pages={91--114},
  year={2017},
  publisher={Elsevier}
}

@article{dong2024persistence,
  title={Persistence B-spline grids: stable vector representation of persistence diagrams based on data fitting},
  author={Dong, Zhetong and Lin, Hongwei and Zhou, Chi and Zhang, Ben and Li, Gengchen},
  journal={Machine Learning},
  volume={113},
  number={3},
  pages={1373--1420},
  year={2024},
  publisher={Springer}
}

@article{villani2009wasserstein,
  title={The wasserstein distances},
  author={Villani, C{\'e}dric},
  journal={Optimal transport: old and new},
  pages={93--111},
  year={2009},
  publisher={Springer}
}

@article{qi2017pointnet,
  title={Pointnet: Deep learning on point sets for 3d classification and segmentation},
  author={Charles, R Qi and Su, Hao and Kaichun, Mo and Guibas, Leonidas J},
  journal={IEEE Conference on Computer Vision and Pattern Recognition},
  year={2017}
}

@inproceedings{chazal2018density,
  title={The Density of Expected Persistence Diagrams and its Kernel Based Estimation},
  author={Chazal, Fr{\'e}d{\'e}ric and Divol, Vincent},
  booktitle={Proceedings of International Symposium on Computational Geometry},
  year={2018}
}

@article{figalli2010optimal,
  title={The optimal partial transport problem},
  author={Figalli, Alessio},
  journal={Archive for Rational Mechanics and Analysis},
  volume={195},
  number={2},
  pages={533--560},
  year={2010},
  publisher={Springer}
}

@article{wang2024multiset,
  title={Multiset Transformer: Advancing Representation Learning in Persistence Diagrams},
  author={Wang, Minghua and Huang, Ziyun and Xu, Jinhui},
  journal={arXiv preprint arXiv:2411.14662},
  year={2024}
}

@article{seversky2016time,
  title={On time-series topological data analysis: New data and opportunities},
  author={Seversky, Lee M and Davis, Shelby and Berger, Matthew},
  journal={IEEE Conference on Computer Vision and Pattern Recognition workshops},
  year={2016}
}

@inproceedings{edelsbrunner1993union,
  title={The union of balls and its dual shape},
  author={Edelsbrunner, Herbert},
  booktitle={Proceedings of Annual Symposium on Computational Geometry},
  pages={218--231},
  year={1993}
}

@article{breiman2001random,
  title={Random forests},
  author={Breiman, Leo},
  journal={Machine Learning},
  volume={45},
  pages={5--32},
  year={2001},
  publisher={Springer}
}

@article{suthaharan2016support,
  title={Support vector machine},
  author={Suthaharan, Shan and Suthaharan, Shan},
  journal={Machine learning models and algorithms for big data classification: thinking with examples for effective learning},
  pages={207--235},
  year={2016},
  publisher={Springer}
}

@book{borg2007modern,
  title={Modern multidimensional scaling: Theory and applications},
  author={Borg, Ingwer and Groenen, Patrick JF},
  year={2007},
  publisher={Springer Science \& Business Media}
}

@article{vaswani2017attention,
  title={Attention is all you need},
  author={Vaswani, A},
  journal={Advances in Neural Information Processing Systems},
  year={2017}
}

@article{lindstrom2002dynamics,
  title={On the dynamics of discrete food chains: Low-and high-frequency behavior and optimality of chaos},
  author={Lindstr{\"o}m, Torsten},
  journal={Journal of Mathematical Biology},
  volume={45},
  number={5},
  pages={396--418},
  year={2002},
  publisher={Springer}
}

@article{calinski1974dendrite,
  title={A dendrite method for cluster analysis},
  author={Cali{\'n}ski, Tadeusz and Harabasz, Jerzy},
  journal={Communications in Statistics-theory and Methods},
  volume={3},
  number={1},
  pages={1--27},
  year={1974},
  publisher={Taylor \& Francis}
}

@article{zielinski2019persistence,
  title={Persistence bag-of-words for topological data analysis},
  author={Zieli{\'n}ski, Bartosz and Lipi{\'n}ski, Micha{\l} and Juda, Mateusz and Zeppelzauer, Matthias and D{\l}otko, Pawe{\l}},
  journal={International Joint Conference on Artificial Intelligence},
  year={2019}
}

@article{reynolds2009gaussian,
  title={Gaussian mixture models.},
  author={Reynolds, Douglas A and others},
  journal={Encyclopedia of biometrics},
  volume={741},
  number={659-663},
  year={2009},
  publisher={Berlin, Springer}
}

@inproceedings{royer2021atol,
  title={Atol: measure vectorization for automatic topologically-oriented learning},
  author={Royer, Martin and Chazal, Fr{\'e}d{\'e}ric and Levrard, Cl{\'e}ment and Umeda, Yuhei and Ike, Yuichi},
  booktitle={International Conference on Artificial Intelligence and Statistics},
  pages={1000--1008},
  year={2021},
  organization={PMLR}
}
\bibliographystyle{plain}
}


\appendix

\section{Vectorization methods of PD}
\label{appendix: pd vect}
Here we introduce unsupervised and supervised vectorization methods for PD.
\begin{itemize}
    \item For unsupervised methods, the simplest category is the basic statistical quantities of PD, including persistence entropy \cite{atienza2020stability,chintakunta2015entropy,rucco2016characterisation}, average and
standard deviation of birth, death and lifespan (death-birth) \cite{asaad2022persistent}, etc. The second category aims to construct polynomial maps (template functions) from PD and evaluate the selected maps on PD to get a vector \cite{di2015comparing,ferri1999representing,kalivsnik2019tropical,perea2023approximating}. The third category is representation based on function that uses PD as its parameter, such as Persistence Curve \cite{chung2022persistence}, Persistence Landscape \cite{bubenik2015statistical,bubenik2017persistence}, Persistence Image \cite{2017persistenceimage}, Persistence Silhouettes \cite{chazal2014stochastic} and Persistence B-spline grids \cite{dong2024persistence}. 

\item A representative supervised method is PersLay \cite{perslay}, a neural network layer for PD based on Deep Sets \cite{zaheer2017deep}, which can be integrated into filtration learning framework for point cloud \cite{nishikawa2024adaptive}. Other methods like topological signature \cite{hofer2017deep,hofer2019learning} are based on learnable exponential functions as structure elements. Similar methods are utilized as part of graph filtration learning framework \cite{hofer2020graph,horntopological,zhang2022gefl}. Recent work \cite{wang2024multiset} focuses on vectorization via Transformer \cite{vaswani2017attention}.
\end{itemize}

\section{Relation with Quantization \& ATOL}
\label{append: relation}

We show the concept difference between Vrep and Quantization/ATOL. The experiments comparing using Quantization results and ATOL-style centers as codebooks are given in Appendix \ref{appendix: codebook construction}.

\textbf{Quantization.} Compared with EPD quantization \cite{divol2021estimation}, which aims to reduce the support size of EPD and use the resultant reduced EPD for vectorization in ML tasks, our work focus on directly vectorization of EPD without the middle quantization step.

\textbf{ATOL.}
ATOL \cite{royer2021atol} is closely related to our work in that both methods view persistence-based objects as finite measures in a Euclidean space and construct unsupervised vector representations that can be used by standard machine learning models. ATOL learns a collection of representative regions from a training set of persistence diagrams and encodes each measure through responses to localized contrast functions around learned centers. In this sense, ATOL is a general measure-vectorization framework for topological descriptors, with an encoding budget controlling the number of learned regions.

Vrep shares the high-level goal of converting measure-valued topological summaries into vectors, but it differs from ATOL in both the object being represented and the form of the representation. ATOL is primarily designed to vectorize individual persistence diagrams or general finite measures. In contrast, Vrep is designed for empirical Expected Persistence Diagrams (EPDs), where each input is already an averaged measure obtained from many sampled persistence diagrams. Thus, Vrep targets the distribution of topological features induced by subsampling, rather than a single persistence diagram.

The representation mechanism is also different. ATOL uses learned centers together with contrast functions, producing feature values that depend on the chosen functional responses around these centers. Vrep instead uses a hard Voronoi partition: each coordinate is the normalized mass of the empirical EPD inside a Voronoi cell. Therefore, Vrep can be interpreted as a direct histogram of the EPD over data-dependent cells. This makes the approximation bias explicit: Vrep preserves cell-level mass allocation but discards within-cell geometry, while ATOL can retain smoother localized responses depending on its contrast functions.

The codebook construction also plays a different role. In ATOL, centers are learned to summarize discriminative regions of a collection of persistence diagrams under a fixed encoding budget. In Vrep, codebooks are sampled from EPD supports and used to define Voronoi cells for measuring empirical EPD mass. The final Vrep representation concatenates histograms over a set of such codebooks. The diagonal-aware variant Vrep$_d$ further includes a Voronoi cell associated with the diagonal, which is useful for separating near-diagonal topological noise from more persistent features.

From this perspective, Vrep can be viewed as an EPD-specific, hard-assignment counterpart to general measure vectorization methods such as ATOL. The two methods make different trade-offs. ATOL provides a flexible learned vectorization for persistence measures through contrast functions and an encoding budget, while Vrep focuses on the empirical distributional structure of EPDs and emphasizes direct mass aggregation, stability under EPD perturbations, and computational scaling with respect to the number of sampled persistence diagrams. Our goal is therefore not to replace ATOL, but to provide a complementary representation tailored to EPDs.

\section{Proofs}
\label{appendix: proof}

\subsection{Proof of Lemma \ref{lemma: lipschitz}}
    Lemma \ref{lemma: lipschitz}. For an EPD $\bar\mu=\lim_{n\rightarrow\infty}\frac{1}{n}\sum_{i=1}^{n}\mu_i$ and an optimal matching $\eta$: $\text{support}(\bar{\mu})\rightarrow\text{support}(\bar{\mu}^\prime)$ in the definition of 1-Wasserstein distance between $\bar{\mu}$ and $\bar{\mu}^\prime$ ,given any codebook $C$ of size $k$, it holds that
    \begin{equation*}
        \|\Phi(\bar{\mu},C)-\Phi(\bar{\mu}^\prime,C)\|_1\leq M\cdot W_1(\bar{\mu},\bar{\mu}^\prime),
    \end{equation*}
    where $M$ is a constant.

\begin{proof}
    Denote the pdf of measure $\bar{\mu}$ as $p$, whose existence is proved in \citealp{chazal2018density} and the pdf of perturbation $\epsilon\sim\mathcal{N}(0,\Sigma)$ as $q$. And $p$ is $k$ times differentiable if the sample space to construct EPD is $k$ times differentiable (Theorem 3.5 in \citealp{chazal2018density}). Denote the corresponding random variable of $p$ as $\alpha$. Given that $\alpha$ and $\epsilon$ is dependent of each other, then the pdf of the perturbed random variable $\alpha+\epsilon$ is the convolution of $p$ and $q$:
    \begin{equation*}
        r(y)=(p\ast q)(y)=\int p(y-z)q(z)dz.
    \end{equation*}

    In practice, we use set the max filtration value to be finite, this means that $p$ is a $C^k$ function in a closed bounded area $\Omega^\prime$ that is a subset of $\Omega$. So $p$ is Lipschitz continuous. Denote the Lipschitz constant as $M$, for $x\in \Omega^\prime$, we have that    
    \begin{equation*}
    \begin{aligned}
|p(x)-r(x)|&=|p(x)-\int q(z)p(x-z) dz|\\
        &=|\int q(z)[p(x-z)-p(x)]dz|\\
        & \leq \int q(z) |p(x-z)-p(x)|dz\\
        & \leq M \int q(z) \|z\|dz
            \end{aligned}
    \end{equation*}

Based on our definition of the perturbation, it holds that $W_1(\bar{\mu},\bar{\mu}^\prime)=\leq \int q(z) \|z\|dz$. So we have 
\begin{equation*}
    |p(x)-r(x)|\leq M\cdot W_1(\bar{\mu},\bar{\mu}^\prime).
\end{equation*}
For any Voronoi Cell $V(c_j)$, we have that
\begin{equation*}
    \begin{aligned}
        |\bar{\mu}(V(c_j))-\bar{\mu}^\prime(V(c_j))|&=|\int_{V(c_j)}p(x)-r(x) dx|\\
        & \leq \int_{V(c_j)} |p(x)-r(x)|dx\\
        & \leq M\cdot W_1(\bar{\mu},\bar{\mu}^\prime)\cdot Vol(V(c_j)).
    \end{aligned}
\end{equation*}

As a result, we have 
\begin{equation*}
\begin{aligned}
        \|\Phi(\bar{\mu},C)-\Phi(\bar{\mu}^\prime,C)\|_1& =\sum_{j=1}^{k}|\bar{\mu}(V(c_j))-\bar{\mu}^\prime(V(c_j))|\\
        &\leq \sum_{j=1}^k M\cdot W_1(\bar{\mu},\bar{\mu}^\prime)\cdot Vol(V(c_j))\\
        & = M\cdot W_1(\bar{\mu},\bar{\mu}^\prime).
\end{aligned}
\end{equation*}


\end{proof}

\subsection{Proof of Lemma \ref{lem: stab2}.}
\textbf{Lemma \ref{lem: stab2}.}               For a given codebook $C$ of size $k$, its perturbation version $C^\prime$, and EPD $\bar\mu=\lim_{n\rightarrow\infty}\frac{1}{n}\sum_{i=1}^{n}\mu_i$, it holds that
     \begin{equation*}
         \|\Phi(\bar{\mu},C)-\Phi(\bar{\mu},C^\prime)\|_1\leq kC_0 \Delta,
     \end{equation*}
     where $C_0$ is a constant determined by $\bar{\mu}$.
\begin{proof}

Similar to Theorem 5.1 \cite{beer1974hausdorff}, we assume that :
\begin{itemize}
    \item $\beta=\min_{j\neq j}\|c_i-c_j\|>0$.
    \item There exists $\rho$ such that $B(x,\rho)\cap C\setminus\{c_i\}\neq \emptyset$ for $x\in\Omega^\prime$, where $\Omega^\prime$ is the bounded and closed space we use in the proof of Lemma \ref{lemma: lipschitz}.
\end{itemize}

The first assumption means that there are no overlapping elements in the codebook $C$. For the second assumption, we can choose $\rho=D$, i.e. the diameter of $\Omega^\prime$, and $B(x,\rho)$ would cover $\Omega^\prime$. So it holds that $\forall i\in[k], B(x,\rho)\cap C\setminus\{c_i\}\neq\emptyset$.

According to Theorem 5.1 and Remark 5.3 in \citealp{beer1974hausdorff}, let $\beta=\min_{i\neq j}\|c_i-c_j\|>0$, we have that $\forall \Delta=O(\epsilon)=\gamma\epsilon$, where $0<\epsilon<\beta/6$, $\epsilon\leq 8\cdot d(\bigcup_{i\in[k]}c_i,\partial\Omega^\prime)$ and $\gamma=\min \{\frac{1}{16}\hat{\delta}(\frac{\beta}{12\rho+5\beta}),\frac{d(\bigcup_{i\in[k]}c_i,\partial\Omega^\prime)}{8(\rho+\beta/6)}\}$\footnote{Here $\hat{\delta}$ is the modulus of convexity, i.e. a function $\hat{\delta}: [0,2]\rightarrow[0,1]$ defined by $\hat{\delta}(\epsilon)=\inf\{1-|(x+y)/2|:\;|x-y|\geq\epsilon,|x|=|y|=1\}$. There are more convent functions for specific cases. The detials are given in page 6 of \citealp{beer1974hausdorff} (\url{https://arxiv.org/pdf/1103.4125}).}, it holds that $d_H(V(c_i),V(c_i^\prime))\leq\epsilon$.

In order to relate the upper bound to $\Delta$, we rewrite the result above as following: for $0<\Delta<\gamma\beta/6$ and $\Delta\leq8\cdot d(\bigcup_{i\in[k]}c_i,\partial\Omega^\prime)\cdot \gamma$, it holds that
\begin{equation*}
    d_H(V(c_i),V(c_j))\leq\frac{\Delta}{\gamma}.
\end{equation*}

$V(c_i),V(c_j)\in\Omega^\prime$ is a bounder area in the plane, we denote the total Perimeter of these areas as $L_i$. Then according to $d_H(A,B)\leq\frac{\Delta}{\gamma}$, it holds that
\begin{equation*}
    \text{Vol}(V(c_i)\triangle V(c_j))\leq L_{\max}\cdot \frac{\Delta}{\gamma},
\end{equation*}
where $L_{\max}=\max_{i\in[k]}L_i$.

Given the fact that $p$ (pdf of $\bar{\mu}$) is a continuous function in the bounded and closed space $\Omega^\prime$, we know that $p$ is bounded and let $M_p=\|p\|_\infty=\text{sup}_{x\in\Omega^\prime}p(x)$. Then it holds that
\begin{equation*}
    |\bar{\mu}(V(c_i)-\bar{\mu})(V(c_i^\prime))|\leq M_p\cdot\text{Vol}(V(c_i)\triangle V(c_j))\leq M_p\cdot L_{\max}\cdot\frac{\Delta}{\gamma}.
\end{equation*}
Then 
\begin{equation*}
    \|\Phi(\bar{\mu},C)-\Phi(\bar{\mu},C^\prime)\|_1=\sum_{i\in[k]}|\bar{\mu}(V(c_i)-\bar{\mu})(V(c_i^\prime))|\leq kC_0\Delta,
\end{equation*}
where $C_0=M_p L_{\max} /\gamma$.

\end{proof}
\subsection{Proof of Theorem \ref{theorem: stab}.}
   \textbf{Theorem \ref{theorem: stab}}. For an EPD dataset $\{\bar{\mu}_1,...,\bar{\mu}_m\}$ and $\Delta$-perturbation of every EPD $\bar{\mu}_i$, it holds that
    \begin{equation*}
        \|\hat{\Phi}(\bar{\mu}_i)-\hat{\Phi}(\bar{\mu}_i^\prime)\|_1\leq L\Delta,
    \end{equation*}
    where $L=mt(M^{\max}+kC_0^{\max})$ is a constant determined by $m$, $t$, $k$, EPD dataset $\{\bar{\mu}_1,...,\bar{\mu}_m\}$ and codebook set $S$.

\begin{proof}
Based on the fact that $\Phi(\bar{\mu}^\prime,C^\prime)-\Phi(\bar{\mu},C)=\Phi(\bar{\mu}^\prime,C^\prime)-\Phi(\bar{\mu},C^\prime)+\Phi(\bar{\mu},C^\prime)-\Phi(\bar{\mu},C)$, it holds that
\begin{equation*}
    \|\Phi(\bar{\mu}^\prime,C^\prime)-\Phi(\bar{\mu},C)\|_1 \leq \|\Phi(\bar{\mu}^\prime,C^\prime)-\Phi(\bar{\mu},C^\prime)\|_1+\|\Phi(\bar{\mu},C^\prime)-\Phi(\bar{\mu},C)\|_1
\end{equation*}
According to Lemma \ref{lem: stab2} \& \ref{lemma: lipschitz}, for $0<\Delta<\gamma\beta/6$ and $\Delta\leq8\cdot d(\bigcup_{i\in[k]}c_i,\partial\Omega^\prime)\cdot \gamma$, it holds that 
\begin{equation*}
    \|\Phi(\bar{\mu}^\prime,C^\prime)-\Phi(\bar{\mu},C)\|_1\leq (M+kC_0)\Delta.
\end{equation*}

Take $M^{\max}=\max_{C\in S,j\in[m]}M(C,\bar{\mu}_j)=\max_{j\in[m]}M(\bar{\mu}_j)$, $C_0^{\max}=\max_{C\in S,j\in[m]}C_0(C,\bar{\mu}_j)$, for $\Delta$ that satisfies $0<\Delta<\min_{C\in S}\gamma(C)\beta(C)/6$ and $\Delta\leq \min_{C\in S}8\cdot d(\bigcup_{i\in[k]}c_i,\partial\Omega^\prime)\cdot \gamma(C)$, it holds that


\begin{equation*}
\begin{aligned}
\|\hat{\Phi}(\bar{\mu}_i)-\hat{\Phi}(\bar{\mu}_i^\prime)\|_1  & = \| \bigoplus_{C\in S} \Phi(\bar{\mu}_i,C)-\bigoplus_{C\in S}
\Phi(\bar{\mu}_i^\prime,C^\prime)\|_1\\
& = \sum_{C\in S}\|\Phi(\bar{\mu}_i,C)-\Phi(\bar{\mu}_i^\prime,C^\prime)\|_1\\
& \leq L\Delta,
\end{aligned}
\end{equation*}
where $L=mt(M^{\max}+kC_0^{\max})$.

\end{proof}
\subsection{Proof of Theorem \ref{thm: sep}.}
\textbf{Theorem \ref{thm: sep}.}     For any given codebook $C$ and two EPDs ($\bar{\mu},\bar{\nu}$) with the same total mass, it holds that
    \begin{equation*}
            \|\Phi(\bar{\mu},C)-\Phi(\bar{\nu},C)\|_1  \geq  \frac{W_1(\bar{\mu},\bar{\nu})-W_1(\bar{\mu},\hat{\mu}(C))- W_1(\bar{\nu},\hat{\nu}(C))}{d_{max}(C)},
    \end{equation*}
    where $d_{max}(C)=\max_{c_i,c_j\in C}\|c_i-c_j\|_2$ is the diameter of $C$  and $W_1$ refers to  Wasserstein distance.
\begin{proof}
    By the triangle inequality of Wasserstein distance, $\forall$ codebook $C$, it holds that 
\begin{equation*}
    \begin{aligned}
        W_1(\bar{\mu},\bar{\nu}) & \leq W_1(\bar{\mu},\hat{\mu}(C)) +W_1(\hat{\mu}(C),\bar{\nu})\\
        & \leq W_1(\bar{\mu},\hat{\mu}(C))+W_1(\hat{\mu}(C),\hat{\nu}(C))+W_1(\bar{\nu},\hat{\nu}(C)).
    \end{aligned}
\end{equation*}
By rearranging the terms, we have 
\begin{equation*}
    W_1(\hat{\mu}(C),\hat{\nu}(C))\geq W_1(\bar{\mu},\bar{\nu})- W_1(\bar{\mu},\hat{\mu}(c))-W_1(\bar{\nu},\hat{\nu}(c)). 
\end{equation*}
By the definition of optimal partial transport metric $W_1(\hat{\mu}(c),\hat{\nu}(c))=\inf_{\pi}\iint_{\Omega\times\Omega} \|x-y\|_2d\pi$ corresponds to the minimal cost of all the transportation plans. So it is the lower bound of the cost of a particular transportation plan. We now give a designed transportation plan that link the cost and the distance between Vreps $\|\Phi(\bar{\mu},C)-\Phi(\bar{\nu},C)\|_1$.

A particular transportation plan from $\hat{\mu}(C)$ to $\hat{\nu}(C)$: since $\hat{\mu}(C)$ and $\hat{\nu}(C)$ have the same support $C$, we keep the mass of each support point $c_i$ as $m(c_i) =min(\hat{\mu}(C)(c_i),\hat{\nu}(C)(c_i))$ fixed in the transportation. We move the extra mass in $\{c_i:m(c_i)<\hat{\mu}(C)(c_i)\}$ to the mass-vacant area $\{c_i:m(c_i)>\hat{\mu}(C)(c_i)\}$. 

Since the sum of mass that is needed to be moved is less than $\|\Phi(\bar{\mu},C))-\Phi(\bar{\nu},C)\|_1$ and the cost of each transportation is less than or equal to $d_{max}(C)$, the total cost of this transportation plan is less than or equal to $\|\Phi(\bar{\mu},C)-\Phi(\bar{\nu},C)\|_1 \cdot d_{max}(C)$. 

Based on that, the cost of this particular transportation plan is larger than $W_1(\hat{\mu}(C),\hat{\nu}(C))$. So it holds that 
\begin{equation*}
\begin{aligned}
    \|\Phi(\bar{\mu},&C)-\Phi(\bar{\nu},C)\|_1 \cdot d_{max}(C)\geq  W_1(\hat{\mu}(C),\hat{\nu}(C)) \\
     \geq & W_1(\bar{\mu},\bar{\nu})- W_1(\bar{\mu},\hat{\mu}(C))-W_1(\bar{\nu},\hat{\nu}(C)).
\end{aligned}
\end{equation*}

\end{proof}

\subsection{Proof of Theorem \ref{thm: sep2}}
\textbf{Theorem \ref{thm: sep2}.}     For any given codebook $C$ and two EPDs ($\bar{\mu},\bar{\nu}$) with the same total mass, it holds that
    \begin{equation*}
            \|\Phi(\bar{\mu},C)-\Phi(\bar{\nu},C)\|_1  \leq  \frac{k}{d_{min}(C)}[W_1(\bar{\mu},\bar{\nu})+W_1(\bar{\mu},\hat{\mu}(C))+ W_1(\bar{\nu},\hat{\nu}(C))],
    \end{equation*}
    where $d_{min}(C)=\min_{c_i,c_i\in C}\|c_i-c_j\|_1$.

\begin{proof}
    By the triangle inequality of Wasserstein distance, $\forall$ codebook $c$, it holds that
    \begin{equation*}
    \begin{aligned}
        W_1(\hat{\mu}(C),\hat{\nu}(C)) & \leq W_1(\hat{\mu}(C),\bar{\mu}) +W_1(\bar{\mu},\hat{\nu}(C))\\
        & \leq W_1(\hat{\mu}(c),\bar{\mu})+W_1(\bar{\mu},\bar{\nu})+W_1(\hat{\nu}(C),\bar{\nu}).
    \end{aligned}
\end{equation*}

For $W_1(\hat{\mu}(C),\hat{\nu}(C))$, the transportation cost at least contains $\max_{i}|\hat{\mu}(C)(c_i)-\hat{\nu}(C)(c_i)|\cdot d_{min}(C)$. Since $\|\Phi(\bar{\nu},C)-\Phi(\bar{\nu},C)\|_{\infty}=\max_{i}|\hat{\mu}(C)(c_i)-\hat{\nu}(C)(c_i)|$, it holds that 
\begin{equation*}
\begin{aligned}
    W_1(\hat{\mu}(C),\hat{\nu}(C))\geq & \|\Phi(\bar{\nu},C)-\Phi(\bar{\nu},C)\|_{\infty}\cdot d_{min}(C)\\
     \geq &\frac{\|\Phi(\bar{\nu},C)-\Phi(\bar{\nu},C)\|_1}{k}\cdot d_{min}(C).
\end{aligned}
\end{equation*}

Combined with the first inequality derived from triangle inequality of Wasserstein distance, we obtain the final result:
    \begin{equation*}
    \|\Phi(\bar{\mu},C)-\Phi(\bar{\nu},C)\|_1  \leq  \frac{k}{d_{min}(C)}[W_1(\bar{\mu},\bar{\nu})+W_1(\bar{\mu},\hat{\mu}(C))+ W_1(\bar{\nu},\hat{\nu}(C))].
    \end{equation*}
\end{proof}

\section{Additional Notes}

\subsection{Effect of the assumption due to the use of point transformation function $f$.}
\label{appendix: example distribution}

Here we given two special EPDs (distribution of topological features) $\mu$ and $\nu$. In this constructed example, the Voronoi histogram with a suitable codebook better preserves the relevant mass displacement than the selected PI/PS parameterizations. This illustrates one possible trade-off of partition-based aggregation, not a general superiority result.

For an EPD $\mu$, a uniform distribution supported on a rectangle with the length of one side of the rectangle being 1 and the total mass being 1, we move $\mu$ along the diagonal $\partial\Omega$ by $1-\alpha$ to get $\nu$, as shown in Figure \ref{fig: example EPD} (a). We denote the support of $\mu$ ($\nu$) as $\mathcal{S}_\mu$ ($\mathcal{S}_\nu$) and the intersection area as $A=\mathcal{S}_\mu \cap \mathcal{S}_\nu$.
Assume that the translation from $\mu$ to $\nu$ is very small ($\alpha$ is close to 1) so that the Wasserstein distance between $\mu$ and $\nu$ is $1-\alpha$, i.e $W(\mu,\nu)=1-\alpha$.

For our proposed codebook-based representation $\Phi$, once the size $k$ of codebook $C$ is very large ($\partial\Omega\notin C$), given the histogram nature of $\Phi$ with each Voronoi Cell as bin, it holds that $\|\Phi(\mu,C)-\Phi(\nu,C)\|_1/\|\Phi(\mu,C)\|_1=2(1-\alpha)=2W(\mu,\nu)$ for $C$ sampled from $\mu$ or $\nu$. So for Vrep, it holds that $\|\hat{\Phi}(\mu)-\hat{\Phi}(\nu)\|_1/\|\hat{\Phi}(\mu)\|_1=2(1-\alpha)=2W(\mu,\nu)$.

For linear representation $\Psi$ like PI and PS, it holds that

\begin{equation}
    \begin{aligned}
        & \frac{\|\Psi(\mu)-\Psi(\nu)\|_1}{\|\Psi(\mu)\|_1}\\
        = & \frac{\|\sum_{x\in \mathcal{S}_{\mu}\setminus A}f(x)-\sum_{y\in \mathcal{S}_{\nu}\setminus A}f(y)\|_1}{\|\Psi(\mu)\|_1}\\
        = &  \frac{\|\sum_{x\in\mathcal{S}_\mu\setminus A }[f(x)-f(\eta(x))]\|_1}{\|\Psi(\mu)\|_1}\\
        \leq & \frac{\sum_{x\in\mathcal{S}_\mu\setminus A }\|f(x)-f(\eta(x))\|_1}{\|\Psi(\mu)\|_1}\\
        \leq & \frac{\sum_{x\in\mathcal{S}_\mu\setminus A }\|f(x)\|_1+\|f(\eta(x))\|_1}{\|\Psi(\mu)\|_1},
        \end{aligned}
    \end{equation}

    where $\eta$ is a bijection between $\mathcal{S}_\mu \setminus A$ and $\mathcal{S}_\nu \setminus A$, for $x\in\mathcal{S}_\mu \setminus A, \eta(x)$ is the mirror point that is symmetrical w.r.t. the middle black dashed line in Figure \ref{fig: example EPD} (b), and $f$ is the point transformation function. Assume that for any $x,y\in \mathcal{S}_\mu\cup \mathcal{S}_\nu$, it holds that $\|f(x)\|_1=\|f(y)\|_1$, where $f(x)$ and $f(y)$ are the transformed functions.

    Given the open half plane $\Omega=\{(t_1,t_2)\in\mathbb{R}^2|t_2> t_1\}$ and $x=(x_b,x_d)\in \Omega$, for PI, $f(x)(\cdot) = exp(-\|\cdot-x\|_2^2/2\sigma^2)$. For PS, $f(x)(\cdot)=max\{0,x_d-|\cdot-x_b|\}$, $x=(x_b,x_d)$. For both PI and PS, there might exist an area $B$ such that for $z\in B$, $\psi(x)(z)$ and $\psi(\eta(x))(z)$ are both larger than 0, as shown in Figure \ref{fig: example EPD} (c) and (d). Hence, the last line of inequality (1) can not take the equal sign. In consequence, it holds that \[\frac{\|\Psi(\mu)-\Psi(\nu)\|_1}{\|\Psi(\mu)\|_1}<\frac{\sum_{x\in\mathcal{S}_\mu\setminus A }\|f(x)\|_1+\|f(\eta(x))\|_1}{\|\Psi(\mu)\|_1}=2(1-\alpha)=2W(\mu,\nu).\]

    Given the specific distributions $\mu$ and $\nu$, Vrep can truthfully reflect Wasserstein distance while PS and PI can not because they assume the distribution is the summation of Gaussian distributions or piecewise linear functions.

\begin{figure}[t]
    \centering
\includegraphics[scale=0.25]{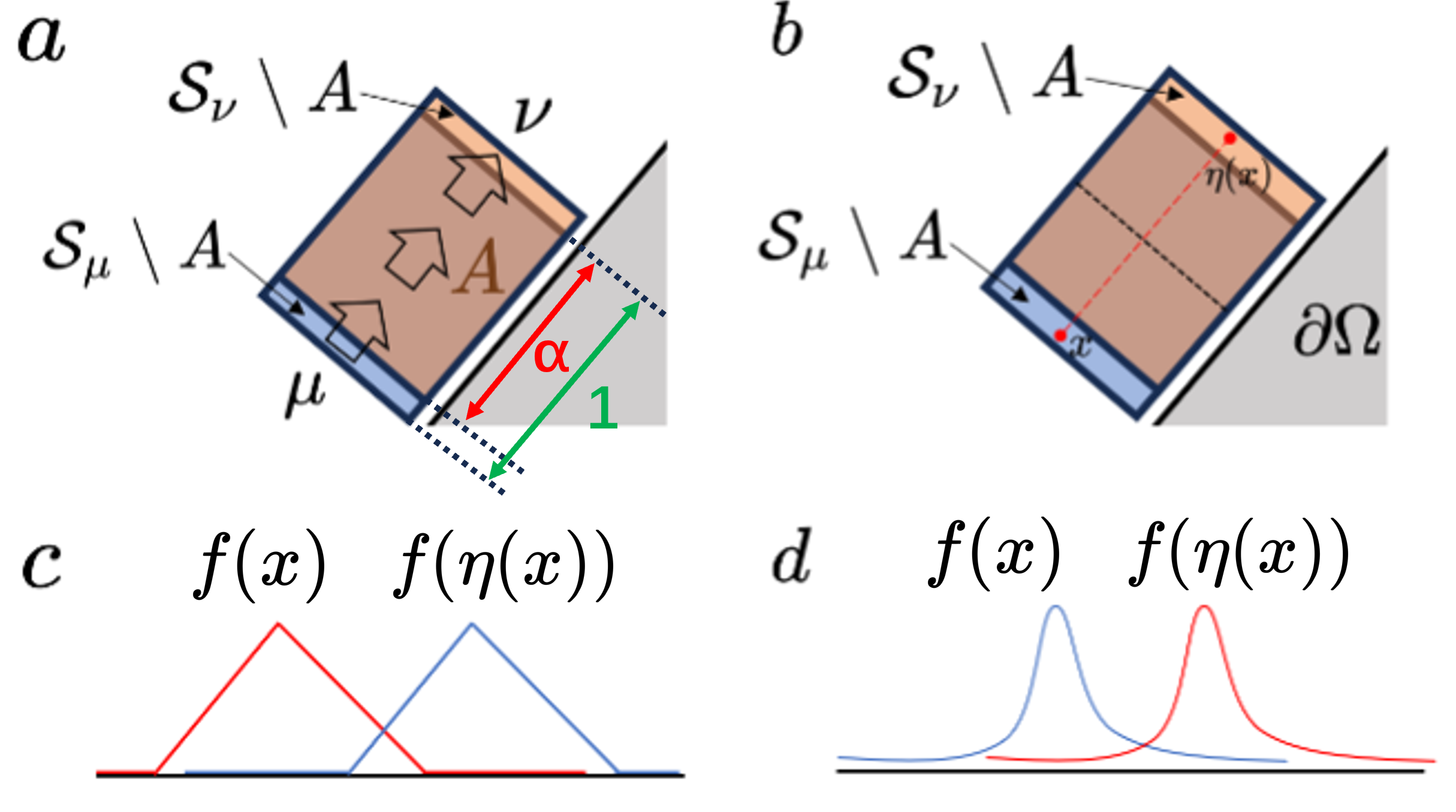}
    \caption{(a) Distribution $\nu$ is obtained via the translation of $\mu$. The arrows indicate the direction of translation. The blue and red arrows are for marking the length of rectangle. (b) Illustration of the bijection map $\eta$. (c) Illustration of the point transformation function of PS. (d) Illustration of the point transformation function of PI.}
\label{fig: example EPD}
\end{figure}

\subsection{Hyperparameter $t$ has no effect on change in Vrep (Vrep$_d$) representation.}
\label{appendix: t effect}
According to Theorem \ref{theorem: stab}, we have that for an EPD dataset $\{\bar{\mu}_1,...,\bar{\mu}_m\}$, and$\Delta$-perturbation of every EPD $\bar{\mu}_i$, it holds that
    \begin{equation*}
        \|\hat{\Phi}(\bar{\mu}_i)-\hat{\Phi}(\bar{\mu}_i^\prime)\|_1\leq L\Delta,
    \end{equation*}
    where $L=mt(M^{\max}+kC_0^{\max})$ is a constant determined by the hyperparameters of Vrep and $\bar{\mu}_i^\prime$ is the perturbed version of $\bar{\mu}_i$.

    Combined with the fact that for any EPD $\bar{\mu}$, $\|\hat{\Phi}(\bar{\mu})\|_1=mt$, we have that 
    \begin{equation*}
    \delta_{\bar{\mu}_i}=\frac{\|\hat{\Phi}(\bar{\mu}_i)-\hat{\Phi}(\bar{\mu}_i^\prime)\|_1}{\|\hat{\Phi}(\bar{\mu}_i)\|_1}\leq \frac{L\Delta}{mt}=(M^{\max}+kC_0^{\max})\Delta.
    \end{equation*}
    So hyperparameter $t$ has no effect on change in Vrep (Vrep$_d$) representation.

\subsection{Data-dependence of Vrep}
\label{sect: data-dependence}
We describe the data-dependent nature of Vrep (and Vrep$_d$). By data-dependence, we mean that the representation of an EPD $\bar{\mu}$ depends more than itself: the codebook $C$ is sampled from another EPD $\bar{\nu}$ and then used to build representations of $\bar{\nu}$ and $\bar{\mu}$. Current vectorization methods just average the vectors of sampled PDs in each EPD, and they do not possess this data-dependence property. Given an EPD dataset with just two EPDs $\bar{\mu}$ and $\bar{\nu}$, our objective here is to clarify precisely what factors $\|\Phi(\bar{\mu},C)-\Phi(\bar{\nu},C))\|_1$ depends on when $C$ is sampled from one particular EPD $\bar{\nu}$.

\begin{theorem}
    \label{thm: sep2}
    For any given codebook $C$ with size $k$ and two EPDs ($\bar{\mu},\bar{\nu}$), it holds that
    \begin{equation*}
            \|\Phi(\bar{\mu},C)-\Phi(\bar{\nu},C)\|_1 \leq  B_U(\bar{\mu},\bar{\nu},C),
    \end{equation*}
    where $d_{min}(C)=\min_{c_i,c_j\in C,i\neq j}\|c_i-c_j\|_2$ and the upper bound $B_U(\bar{\mu},\bar{\nu},C)=\frac{k}{d_{min}(C)}[W_1(\bar{\mu},\bar{\nu})+W_1(\bar{\mu},\hat{\mu}(C))+W_1(\bar{\nu},\hat{\nu}(C))]$.
\end{theorem}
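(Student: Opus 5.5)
The plan is to mirror the argument used for Theorem \ref{thm: sep}, but with the inequalities reversed. Wasserstein distances control the reduced measures $\hat{\mu}(C)$ and $\hat{\nu}(C)$ from above, and a transport-cost argument on the common finite support $C$ controls $\|\Phi(\bar{\mu},C)-\Phi(\bar{\nu},C)\|_1$ from above by $W_1(\hat{\mu}(C),\hat{\nu}(C))$. Throughout we work, as in Theorem \ref{thm: sep}, with equal-mass (normalized) EPDs so that $W_1$ is defined between all four measures.

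\textbf{Step 1: triangle inequality.} Applying the triangle inequality for $W_1$ twice gives
\begin{equation*}
W_1(\hat{\mu}(C),\hat{\nu}(C))\leq W_1(\hat{\mu}(C),\bar{\mu})+W_1(\bar{\mu},\bar{\nu})+W_1(\bar{\nu},\hat{\nu}(C)).
\end{equation*}
The right-hand side is exactly $\frac{d_{min}(C)}{k}B_U(\bar{\mu},\bar{\nu},C)$. It therefore remains to show
\begin{equation*}
W_1(\hat{\mu}(C),\hat{\nu}(C))\geq \frac{d_{min}(C)}{k}\,\|\Phi(\bar{\mu},C)-\Phi(\bar{\nu},C)\|_1 .
\end{equation*}

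\textbf{Step 2: lower bound on transport between atomic measures.} Both $\hat{\mu}(C)$ and $\hat{\nu}(C)$ are supported on $C$, with weights $a_i=\bar{\mu}(V(c_i))$ and $b_i=\bar{\nu}(V(c_i))$. Take any coupling $\pi$ between them, and fix an index $i$ with $a_i>b_i$. At most $b_i$ of the mass leaving $c_i$ can stay at $c_i$, so at least $a_i-b_i$ units must travel to some $c_j$ with $j\neq i$. Each such unit pays at least $d_{min}(C)$. The case $a_i<b_i$ is symmetric, using incoming mass instead. Hence every coupling costs at least $\max_i|a_i-b_i|\,d_{min}(C)=\|\Phi(\bar{\mu},C)-\Phi(\bar{\nu},C)\|_\infty\,d_{min}(C)$.

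Two ways to finish are available:
\begin{itemize}
\item Since the vector has $k$ coordinates, $\|\cdot\|_\infty\geq\|\cdot\|_1/k$, which gives the claimed factor $k$.
\item Summing the moved mass over all $i$ with $a_i>b_i$ instead gives total moved mass $\tfrac12\|a-b\|_1$, hence the sharper bound $W_1\geq\tfrac{d_{min}}{2}\|a-b\|_1$. Since $k\geq 2$, this is stronger than what is needed, so either route suffices.
\end{itemize}
Combining Step 2 with Step 1 and multiplying through by $k/d_{min}(C)$ yields the statement.

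\textbf{Main obstacle.} The delicate part is the variant Vrep$_d$, where one codebook element $c_k=\partial\Omega$ is a set rather than a point. In that case $\hat{\mu}(C)$ places mass "on the diagonal", and distances to $c_k$ must be read as point-to-set distances. The $W_1$ triangle inequality then needs the optimal partial transport setting, where transport to the diagonal is allowed. I would restrict the proof to point codebooks (or to equal-mass normalized measures), and interpret $d_{min}(C)$ with $\|c_i-\partial\Omega\|$ taken as the point-to-set distance. With that convention, the per-unit-cost argument in Step 2 goes through unchanged. The remaining requirement is $d_{min}(C)>0$, i.e. the codebook has no repeated points, which is the same separation assumption used in Lemma \ref{lem: stab2}.
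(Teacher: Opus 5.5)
Your proposal is correct and matches the paper's proof step for step. The triangle inequality bounds $W_1(\hat{\mu}(C),\hat{\nu}(C))$ above by the bracketed sum, and the lower bound $W_1(\hat{\mu}(C),\hat{\nu}(C))\geq d_{min}(C)\,\|\Phi(\bar{\mu},C)-\Phi(\bar{\nu},C)\|_\infty\geq \frac{d_{min}(C)}{k}\|\Phi(\bar{\mu},C)-\Phi(\bar{\nu},C)\|_1$ closes the argument. Your coupling argument via $\pi(c_i,c_i)\leq\min(a_i,b_i)$ justifies a step the paper only asserts, and your sharper factor $2$ in place of $k$ (from summing the excess mass) is also valid.
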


\begin{figure}[h]
    \centering
\includegraphics[scale=0.45]{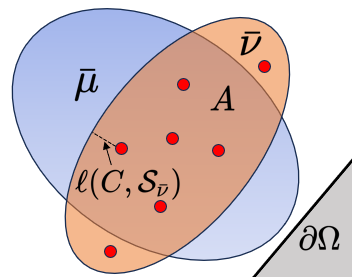}
    \caption{Two measures $\bar{\mu},\bar{\nu}$ on $\Omega$ and codebook $C$ (red points) sampled from $\bar\nu$.}
\label{fig:data_dependent}
\end{figure}

We provide an upper bound $B_U$ for the $l_1$ distance between $\Phi(\bar{\mu},C)$ and $\Phi(\bar{\nu},C)$. A larger $B_U$ would indicate a possible larger $l_1$ distance. Next we focus on the case where $C$ is sampled from $\bar{\nu}$ and its effect on $B_U$.

If the codebook is sampled from $\bar{\nu}$, the key of the analysis is to disentangle $\bar{\mu}$ from $C$ in the $W_1(\bar{\mu},\hat{\mu}(C))$ term of $B_U$. Assume the supports of $\bar{\mu}$ and $\bar{\nu}$ are both compact, denoted as $\mathcal{S}_{\bar{\mu}},\mathcal{S}_{\bar{\nu}}$ and let $A=\mathcal{S}_{\bar{\mu}}\cap \mathcal{S}_{\bar{\nu}}$. If $C$ is sampled from $\bar{\nu}$, then for $W_1(\bar{\mu},\hat{\mu}(C))$, at least mass of $\bar{\mu}(\mathcal{S}_{\bar{\mu}}\setminus A)=\bar{\mu}(\Omega)-\bar{\mu}(A)$ need to be transported to $C\subset A$, as shown in Figure \ref{fig:data_dependent}. So we can obtain a simple lower bound of $B_U$ as
\begin{equation*}
        B_U(\bar{\mu},\bar{\nu},C)\geq  \frac{k}{d_{min}(C)}[W_1(\bar{\mu},\bar{\nu})+W_1(\bar{\nu},\hat{\nu}(C))+\ell(C,\partial(\mathcal{S}_{\bar{\nu}}))(\bar{\mu}(\Omega)-\bar{\mu}(A))],
\end{equation*}
where $\ell(X,Y)=\min_{x\in X,y\in Y}\|x-y\|_2$. \textbf{This lower bound indicates that when codebook $C$ is sampled from $\bar{\nu}$, two factors \footnote{Note that these two factors also affect the lower bound in Theorem 0.5.} (i) less mass of $\bar{\mu}$ in the intersection area $A$ and (ii) a larger $\ell(C,\partial(\mathcal{S}_{\bar{\nu}}))$  indicate a higher lower bound of $B_U$. Consequently, the distance between the codebook-based representation of $\bar{\mu}$ and $\bar{\nu}$ w.r.t. codebook $C$ are more likely to be large. }
    
Combined the results from Theorem 0.5 and \ref{thm: sep2}, if codebook $C\subset \mathcal{S}_\nu$, we have 
{
\small
\begin{equation}
\label{equation: upper and lower bound}
\begin{aligned}
    \frac{1}{d_{max}(C)} [W_1(\bar{\mu},\bar{\nu})-W_1(\bar{\mu},\hat{\mu}(C))- W_1(\bar{\nu},\hat{\nu}(C))]\leq 
    \|\Phi(\bar{\mu},C)-\Phi(\bar{\nu},C)\|_1 \\
    \leq \frac{k}{d_{min}(C)}[W_1(\bar{\mu},\bar{\nu})+W_1(\bar{\mu},\hat{\mu}(C))+ W_1(\bar{\nu},\hat{\nu}(C))].
\end{aligned}
\end{equation}
}
We denote the lower bound of $\|\Phi(\bar{\mu},C)-\Phi(\bar{\nu},C)\|_1$ as $B_L$ and the upper bound as $B_U$. With the result $W_1(\bar{\mu},\hat{\mu}(C))\geq\ell(C,\partial(\mathcal{S}_{\bar{\nu}}))(\bar{\mu}(\Omega)-\bar{\mu}(A))$, we can have a upper bound for $B_L$ and lower bound for $B_U$:
\begin{equation*}
    B_L\leq\frac{1}{d_{max}(C)} [W_1(\bar{\mu},\bar{\nu})-\ell(C,\partial(\mathcal{S}_{\bar{\nu}}))(\bar{\mu}(\Omega)-\bar{\mu}(A))- W_1(\bar{\nu},\hat{\nu}(C))],
\end{equation*}
\begin{equation*}
    B_U\geq\frac{k}{d_{min}(C)}[W_1(\bar{\mu},\bar{\nu})+\ell(C,\partial(\mathcal{S}_{\bar{\nu}}))(\bar{\mu}(\Omega)-\bar{\mu}(A))+ W_1(\bar{\nu},\hat{\nu}(C))].
\end{equation*}
The effect of $\ell(C,\partial(\mathcal{S}_{\bar{\nu}}))(\bar{\mu}(\Omega)-\bar{\mu}(A))$ has coefficient $\frac{k}{d_{min}(C)}$ for $B_U$ and $\frac{1}{d_{max}(C)}$ for $B_L$. So the major effect of $\ell(C,\partial(\mathcal{S}_{\bar{\nu}}))(\bar{\mu}(\Omega)-\bar{\mu}(A))$ lies in $B_U$.




\subsection{Advantage of normalization}
\label{appendix: norm}
As pointed out by \cite{wu2024estimation}, normalization is more appropriate when the number of points in PD is not of direct interest but their spatial distribution is. This is typically the case when the PDs contain many points or are obtained from large random filtrations, so that the number of points in PD will mostly account for noisy topological fluctuations due to sampling. 
\subsection{Limitation}
\label{appendix: limitation}
Vrep is a finite-dimensional histogram representation of Expected Persistence Diagrams (EPDs), and therefore it is inherently lossy. Its main source of information loss comes from aggregating all mass inside the same Voronoi cell into a single coefficient. As a result, Vrep can preserve coarse mass allocation across cells, but it discards within-cell geometry. When the task-relevant variation is dominated by fine local shifts or changes in the shape of the distribution inside a cell, smooth vectorizations such as PI, PS, or kernel methods may be more sensitive.

The quality of Vrep also depends on the choice of codebooks. Codebook size, sampling strategy, and Voronoi cell geometry determine the resolution of the representation. A small or poorly placed codebook may merge distinct topological regions, while a very large codebook increases representation dimension and can reduce stability under perturbations. Our experiments compare default, persistence-weighted, and uniform codebook sampling, but optimal or task-adaptive codebook construction remains open.


Vrep also relies on normalization of EPD mass in our theoretical analysis and main experimental setting. This normalization makes Wasserstein distance directly applicable by giving EPDs equal total mass, but it removes information about the total number or total mass of topological features. In tasks where feature count, total persistence mass, or density of topological events is predictive, normalization may discard useful signal. In such cases, unnormalized variants or additional mass-based summary features may be preferable.


Finally, our empirical evaluation focuses on datasets where topological information is expected to be useful. The results show that Vrep is competitive in these settings and can be computationally attractive when many sampled persistence diagrams are used to form an EPD. However, the experiments do not establish uniform superiority over PI, PS, PL, PWGK, SWK, or supervised topological representations. Broader evaluation on additional domains, larger datasets, and downstream tasks is needed to better characterize when adaptive Voronoi aggregation is preferable to smooth functional or kernel-based representations.

\section{Experiment Setting \& Summary of Datasets}
\label{append: exp setting}
\subsection{Detailed settings}

For the Persistence Image (PI), the resolution is set to $10\times 10$, resulting in a 100-dimensional vector. The bandwidth is in the range of $[0.0001,0.001,0.01,0.1]$. For Persistence Landscape (PL), the kmax hyperparameter $k$ is in the range of $[2,4,6,8]$. The resolution of Persistence Landscape and Persistence Silhouette (PS) is 100 for alignment with Persistence Image. The weight function of PI and PS is the square of persistence. For Persistence Weighted Gaussian Kernel, the bandwidth is also in $[0.0001,0.001,0.01,0.1]$. For Vrep (Vrep$_d$), we set $t=10$ and $k$ is in the range of $[2,4,6,8,10,12,14,16,18,20]$. We set the subsample size of each EPD support to be 50, i.e. $|\mathcal{S}_{\bar\mu}|=50$, The dimension of topological features is also viewed as a hyperparameter. We only consider 0-dimensional or 1-dimensional PD, i.e. connected components or rings.


For the computation of EPD, in CAD, each sampled PD is computed from $2\%$ randomly sampled points from point cloud. For Protein, each sampled PD is computed from 50 randomly sampled points. For time series data, each sampled PD is computed from $50\%$ randomly sampled points from point cloud. The number of sampled PD in each EPD is 50. For Random Forest Classifier, we set the size of the forest to be 100 and use Gini impurity to measure the quality of a split.


In the study on the codebook choice for Vrep and Vrep$_d$, for a dataset of EPDs, for the uniform choice, we choose to sample codebook from a uniform distribution supported on a rectangle area that covers all the EPDs in the dataset. 

For scaleup test on the number of sampled
PDs in each EPD, we use the CAD dataset with no noise. The time cost of each vectorization is computed on the entire dataset. We report the mean time cost of over different hyperparameter choices.

For PointNet, we set batch size to be 8. We use Adam optimizer and set the learning rate to be 0.001. We train the model with 50 epochs in all the datasets except for the Protein dataset, which uses 400 epochs to ensure convergence. In addition, PointNet requires the input to be in the same size but different point clouds in Protein and CAD datasets have different number of points. So for protein, we fix the input size to be 30. For CAD, the input size is 2500. 

All methods used in the following experiments were run on a machine
with i7-12700K CPU (3.60 GHz), 128 GB RAM and NVIDIA GeForce RTX 3060 GPU. PD is computed purely on CPU. PointNet runs on GPU. All the vectorization methods are conducted on CPU.

\subsection{Summary of datasets}
The summary of datasets is provided in Table \ref{tab: dataset}. An example point cloud in CAD dataset is provided in Figure \ref{fig: example point cloud}. For each protein, a 
cross-correlation matrix $\mathbb{C}$ is provided. Then we use distance matrix $\mathbb{D}$, where $\mathbb{D}_{i,j}=1-|\mathbb{C}_{i,j}|$, as input to the EPD computation. CAD is a large-scale
dataset of 3D objects of mechanical components. We choose a preselected version \cite{cao2022approximating} of this dataset which contains two classes `Bearing' and `Motor'. For time series data (Beef, BirdChicken, DistalPhalanxTW, Earthquakes, ECG200), we follow the setting in \cite{dong2024persistence} and transform a time series into point cloud via Time-Delaying embedding \cite{seversky2016time}. 

\begin{table}[h]
\centering
\caption{Summary of the point cloud datasets. In each each dataset: the size (number of point clouds), the minimum and maximum  numbers of points in one point cloud and the number of classes.}
\label{tab: dataset}
\vspace{8pt}
\begin{tabular}{ | c | c | c | c | c | } 
  \hline
  dataset & size & \# min & \# max & \# classes\\\hline
  Protein & 99 & 32 & 691 & 5 \\
  CAD & 126 & 30469 & 206858 &2\\
  Beef & 60 & 468 & 468 & 5 \\
  BirdChicken & 40 & 510 & 510 &2\\
  DPTW & 539 & 78 &78 & 6 \\
  Earthquakes & 461 & 510 &510 & 2 \\
  ECG200 & 200 & 94 & 94 &2 \\\hline
\end{tabular}
\end{table}
\begin{figure}[h]
    \centering
\includegraphics[scale=0.4]{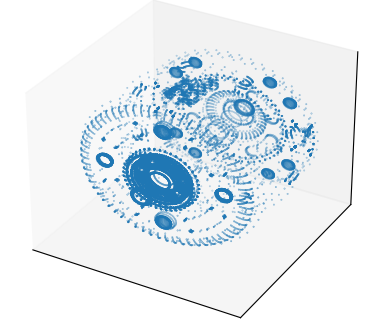}
    \caption{A point cloud in CAD dataset.}
\label{fig: example point cloud}
\end{figure}


\section{Additional Experiments}
\subsection{Hyperparameter sensitivity of Vrep (Vrep$_d$) with different codebook choices}
\label{appendix: sens}

\begin{figure}[h!]
\centering
\includegraphics[width=0.4\textwidth]{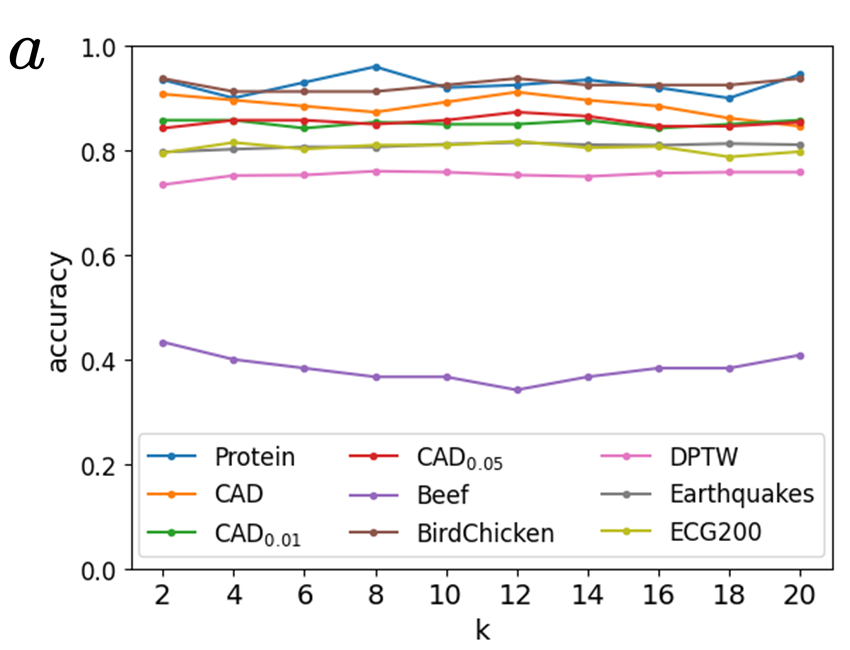}
\includegraphics[width=0.4\textwidth]{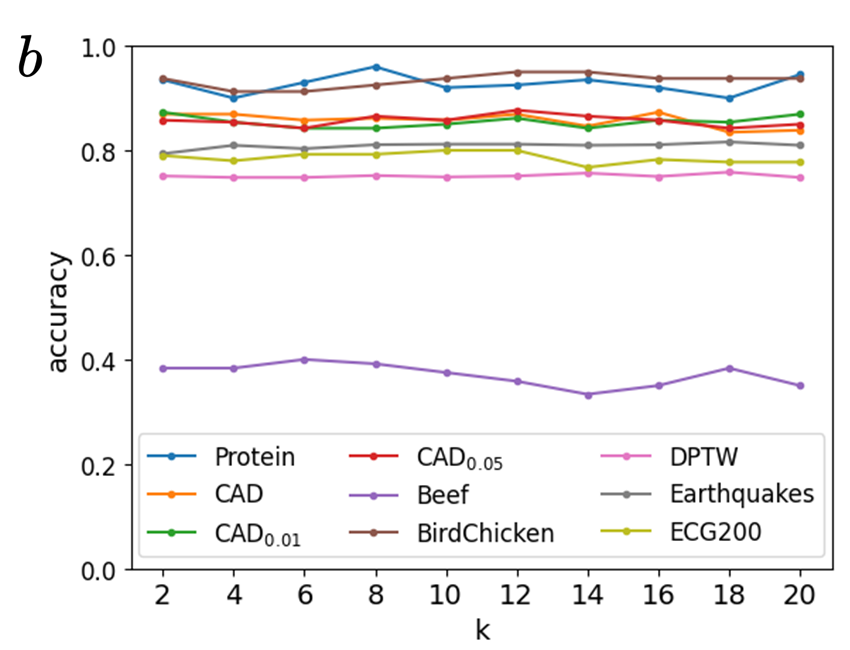}
\caption{Hyperparameter sensitivity of Vrep (a) and Vrep$_d$ (b) with the default codebook choice w.r.t. $k$. By default, the value of $t$ is 10. 
}
\label{fig: sens}
\end{figure}

\begin{figure}[h!]
\centering
\includegraphics[width=0.4\textwidth]{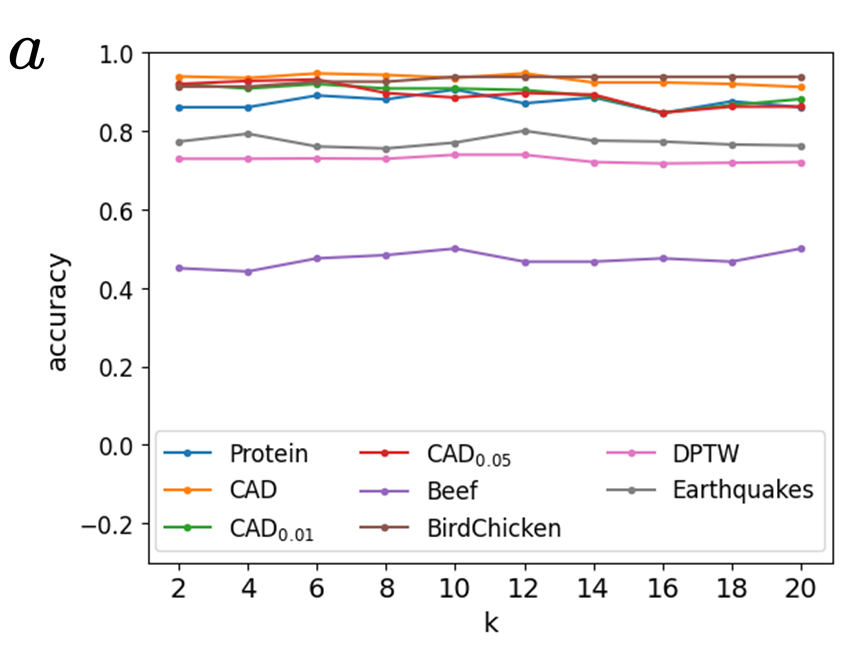}
\includegraphics[width=0.4\textwidth]{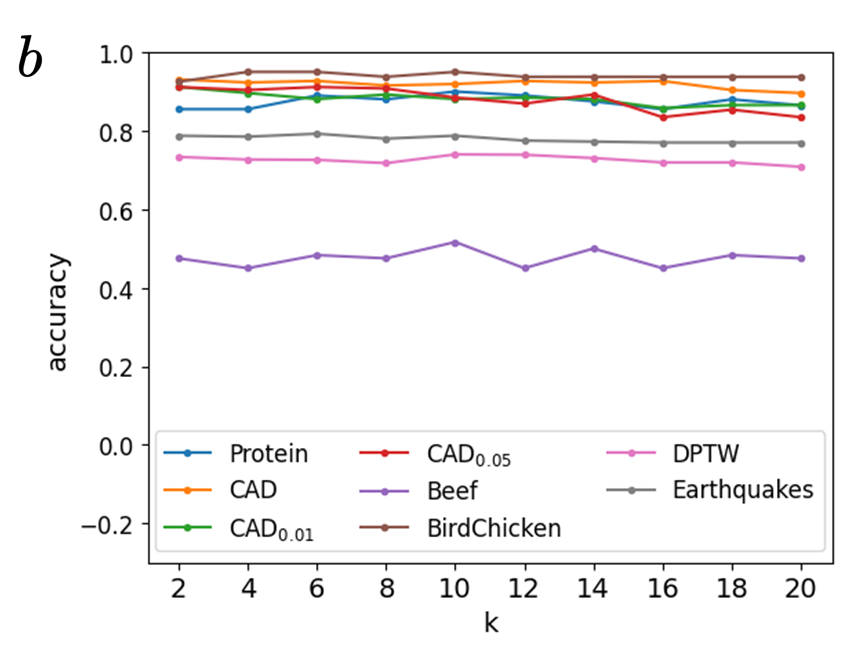}
\caption{Hyperparameter sensitivity of Vrep (a) and Vrep$_d$ (b) with the persistence-weighted codebook choice w.r.t. $k$. By default, the value of $t$ is 10. 
}
\label{fig: sens pw}
\end{figure}

\begin{figure}[h!]
\center
\includegraphics[width=0.4\textwidth]{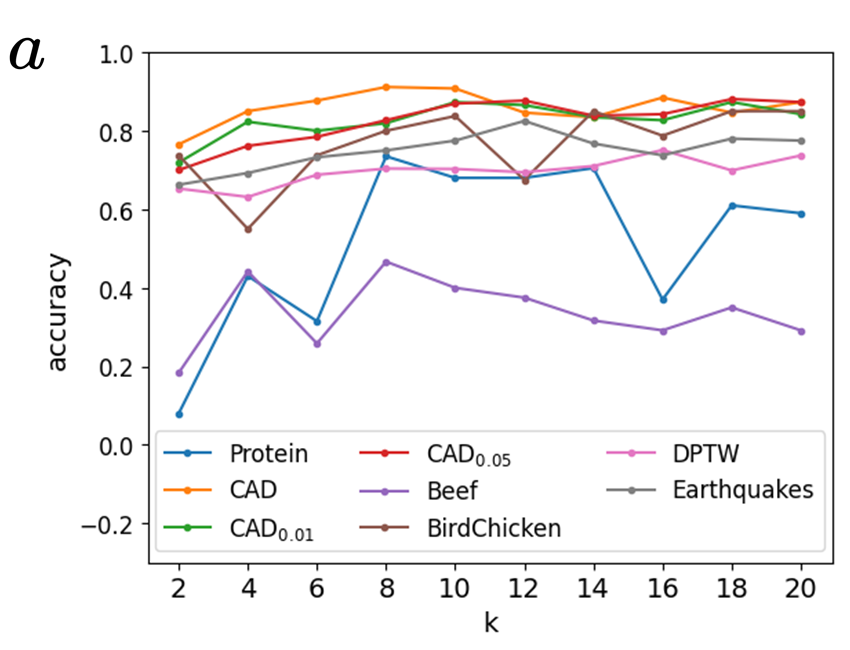}
\includegraphics[width=0.4\textwidth]{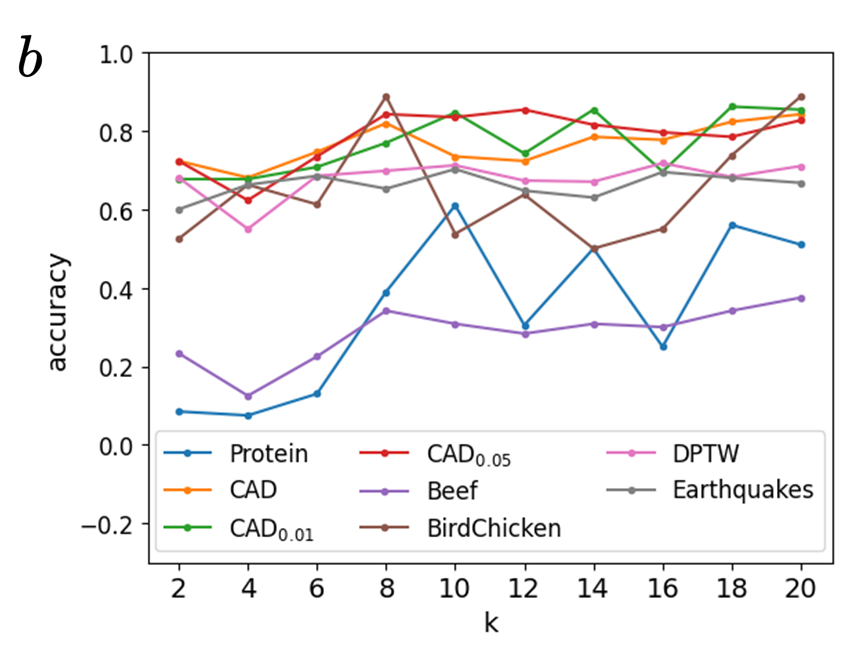}
\caption{Hyperparameter sensitivity of Vrep (a) and Vrep$_d$ (b) with the uniform codebook choice w.r.t. $k$. By default, the value of $t$ is 10. 
}
\label{fig: sens uniform}
\end{figure}

The hyperparameter sensitivity results of Vrep (Vrep$_d$) with persistence-weighted and uniform codebook choice are shown in Figure \ref{fig: sens pw} and \ref{fig: sens uniform}, respectively. The uniform case is less stable w.r.t. $k$ because the codebook area of the uniform case, i.e. the minimal box above the diagonal that bounds all the EPDs, is larger than the codebook area of the persistence-weighted case, which is restricted to the support of EPD. And when a new point is add to the codebook, a larger codebook area contains more variability for the location of this new point. This variability leads to the instability.

\subsection{Comparison with Persistence Weighted Gaussian Kernel and Sliced Wasserstein Kernel}
\label{appendix: PWGK}

Persistence Weighted Gaussian Kernel (PWGK) is a kernel for PD. When employed on EPD, it measures the similarity of two EPDs $\bar{\mu}_1=\frac{1}{n}\sum_{i=1}^{n}\mu^1_i$ and $\bar{\mu}_2=\frac{1}{n}\sum_{i=1}^{n}\mu^2_i$ as $\frac{1}{n}\sum_{i=1}^n\text{PWGK}(\mu^1_i,\mu^2_i)$. When compared with PWGK, Vrep (Vrep$_d$) is used in a linear kernel fashion, i.e. the similarity of two EPDs is defined to be $\left \langle \hat{\Phi}(\bar{\mu}_1),\hat{\Phi}(\bar{\mu}_2)\right \rangle $. The average accuracy and time cost over 10 random split are shown in Tables \ref{tab: exp PWGK} and \ref{tab: exp PWGK2}. The time cost of computing PWGK for EPD is too large for us to conduct scaleup test, so here we only report its average time cost for the datasets. The same goes for Sliced Wasserstein Kernel (SWK).

\begin{table}[h]
    \centering
    \caption{Average accuracy and time used to computing kernel value of Vrep, Vrep$_d$, PWGK and SWK. Here we only consider 0-dimensional PD and EPD.}
    \label{tab: exp PWGK}
    \vspace{8pt}
    \resizebox{0.95\textwidth}{!}{
    \begin{tabular}{|c|c|c|c|c|c|c|c|c|c|c|}
    \hline
        ~ & \multicolumn{2}{c|}{Vrep} & \multicolumn{2}{c}{Vrep$_d$} & \multicolumn{3}{|c|}{PWGK} & \multicolumn{3}{|c|}{SWK} \\ \cline{2-11}
        ~ & EPD & time (s) & EPD & time (s) & PD & EPD & time (s) & PD & EPD & time (s) \\\hline
        Protein & 0.975 & 4.24 & 0.975 & 2.42 & 0.880 & 0.895 & 151.35  & 0.955 & 0.865 & 25.97 \\
        CAD & 0.846 & 6.76 & 0.823 & 7.64 & $\backslash$ & 0.912 & 1339.71& $\backslash$& 0.785 & 3606.91 \\
        CAD$_{0.01}$ & 0.835 & 4.16 & 0.827 & 4.86 & $\backslash$ & 0.900 & 1650.24 & $\backslash$ & 0.785 & 3434.44 \\
        CAD$_{0.05}$ & 0.838 & 4.16 & 0.842 & 4.87 & $\backslash$& 0.877 & 2120.78 & $\backslash$& 0.662 & 5200.52 \\
        Beef & 0.483 & 0.981 & 0.5 & 1.14 & 0.250 & 0.408 & 82.23 & 0.267 & 0.150 & 165.99 \\
        BirdChicken & 0.938 & 0.44 & 0.938 & 0.51 & 0.938 & 0.938 & 43.02 & 0.888 & 0.900 & 84.22 \\
        DPTW & 0.646 & 77.98 & 0.642 & 90.56 & 0.693 & 0.737 & 2056.88 & 0.744 & 0.750 & 1733.85 \\
        Earthquakes & 0.809 & 56.07 & 0.786 & 65.09 & 0.803 & 0.787 & 4322.23 & 0.803 & 0.803 & 7716.87\\
        ECG200 & 0.758 & 10.24 & 0.753 & 11.87 & 0.680 & 0.740 & 261.57 & 0.807 & 0.798 & 302.59\\\hline
    \end{tabular}}
\end{table}

\begin{table}[h]
    \centering
    \caption{Average accuracy and time used to computing kernel value of Vrep, Vrep$_d$ PWGK and SWK. Here we only consider 1-dimensional PD and EPD.}
    \label{tab: exp PWGK2}
    \vspace{8pt}
    \resizebox{0.95\textwidth}{!}{
    \begin{tabular}{|c|c|c|c|c|c|c|c|c|c|c|}
    \hline
        ~ & \multicolumn{2}{c|}{Vrep} & \multicolumn{2}{c|}{Vrep$_d$} & \multicolumn{3}{c|}{PWGK} & \multicolumn{3}{c|}{SWK}\\ \cline{2-11}
        ~ & EPD & time (s) & EPD & time (s) & PD & EPD & time (s) & PD & EPD & time (s) \\\hline
       Protein & 0.915  & 2.18  & 0.895  & 2.43  & 0.910  & 0.330  & 86.12 & 0.520 & 0.165 & 45.37   \\ 
        CAD & 0.877  & 4.34  & 0.865  & 5.03  & $\backslash$ & 0.919  & 162.33 & $\backslash$ & 0.888 & 347.89  \\ 
        CAD$_{0.01}$ & 0.846  & 4.33  & 0.838  & 5.06  & $\backslash$ & 0.919  & 163.97 & $\backslash$ & 0.900 & 386.69  \\ 
        CAD$_{0.05}$ & 0.858  & 4.24  & 0.846  & 4.98  & $\backslash$ & 0.896  & 179.90 & $\backslash$ & 0.888 & 447.42  \\ 
        Beef & 0.550  & 0.93  & 0.517  & 1.10  & 0.367  & 0.300  & 17.29 & 0.275 & 0.067 & 11.70   \\ 
        BirdChicken & 0.963  & 0.41  & 0.975  & 0.49  & 0.850  & 0.850  & 8.88 & 0.900 & 0.900 & 8.14  \\ 
        DPTW & 0.756  & 73.75  & 0.751  & 85.91  & 0.589  & 0.713  & 847.07 & 0.771 & 0.723 & 265.75   \\ 
        Earthquakes & 0.780  & 58.45  & 0.783  & 67.87  & 0.745  & 0.808  & 2780.98 & 0.803 & 0.803 & 6822.34   \\ 
        ECG200 & 0.790  & 10.08  & 0.813  & 11.95  & 0.675  & 0.717  & 130.12 & 0.813 & 0.700 & 36.11 \\ \hline
    \end{tabular}}
\end{table}

For 0-dim topological features, PWGK outperforms Vrep and Vrep$_d$ in CAD, CAD$_{0.01}$, CAD$_{0.05}$ and DPTW. And SWK outperforms Vrep and $\text{Vrep}_d$ only on DPTW and ECG200. For 1-dim topological features, PWGK and SWK outperforms Vrep and Vrep$_d$ in CAD, CAD$_{0.01}$, CAD$_{0.05}$ and Earthquakes. In summary, PWGK and SWK perform similarly to Vrep and Vrep$_d$. But the time cost of PWGK and SWK is much higher than those of Vrep and Vrep$_d$.

In addition, we notice that the accuracy of Vrep and Vrep$_d$ used in a linear kernel fashion is lower than the results when they used as vector and input into a Random Forest classifier. So it is recommended that Vrep (Vrep$_d$) is used in vector form instead of kernel form.
\subsection{Comparison with more traditional baselines: Betti Curve and Euler Curve}
\label{appendix: Betti and Euler Curve}
\begin{table}[h]
    \centering
    \caption{Classification accuracy of Betti and Euler Curves. The classifier is Random Forest.}
    \label{exp: curves}
    \vspace{8pt}
    \resizebox{0.38\textwidth}{!}{
    \begin{tabular}{|c|c|c|c|c|}
    \hline
        ~ & \multicolumn{2}{c|}{Betti Curve} & \multicolumn{2}{c|}{Euler Curve}   \\ \cline{2-5}
        ~ & PD & EPD & PD & EPD  \\ \hline
        Protein & 0.95  & 0.89  & 0.21  & 0.93   \\
        CAD & $\backslash$ & 0.90  & $\backslash$ & 0.91   \\ 
        CAD$_{0.01}$ & $\backslash$ & 0.90  & $\backslash$ & 0.90   \\ 
        CAD$_{0.05}$ & $\backslash$ & 0.88  & $\backslash$ &   0.90\\ 
        Beef & 0.37  & 0.38  & 0.37  & 0.50   \\ 
        BirdChicken & 0.96  & 0.90  & 0.91  & 0.90   \\ 
        DPTW & 0.74  & 0.73  & 0.74  & 0.74   \\ 
        Earthquakes & 0.78  & 0.79  & 0.78  & 0.77   \\ 
        ECG200 & 0.78 & 0.75 & 0.83 & 0.81  \\ \hline
    \end{tabular}}
\end{table}

The results of two rather standard method Betti and Euler Curve are provided in Table \ref{exp: curves}. For Betti Curve, we choose the homology dimension to be 1. Although the time cost of Betti and Euler Curve is low, the accuracies of both Betti and Euler Curve are lower than Vrep over all the datasets.

\newpage
\subsection{Unsupervised dimensionality reduction task with Vrep on Dynamic System data}
\label{appendix: DR}

\begin{table}[h]
\centering
\caption{Dimensionality reduction results of different representation methods and corresponding Calinski-Harabasz (CH) index.}
\label{tab: DR}
\vspace{8pt}
\begin{tabularx}{\linewidth}{c X}
  \toprule
    \hfill \makecell{\begin{minipage}[c]{0.33\textwidth}
             \centering
             \raisebox{-.5\height}{\includegraphics[width=\textwidth]{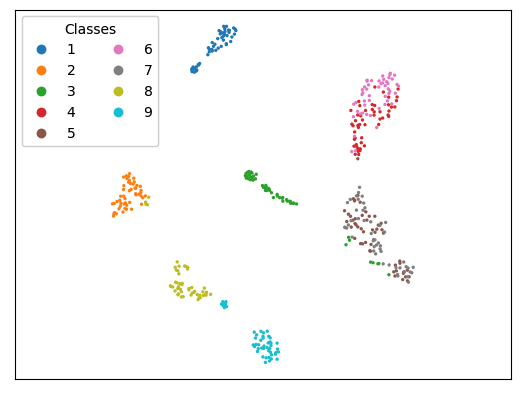}}
            \end{minipage} \\ Vrep, $\text{CH}=1751$ } 
    \hfill \makecell{\begin{minipage}[c]{0.33\textwidth}
             \centering
             \raisebox{-.5\height}{\includegraphics[width=\textwidth]{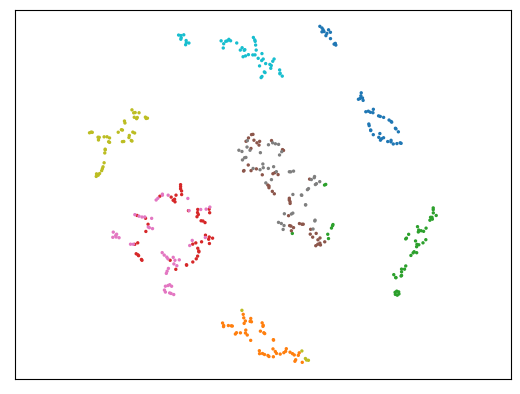}}
            \end{minipage} \\ Vrep$_d$, $\text{CH}=1702$} 
    \hfill \makecell{\begin{minipage}[c]{0.33\textwidth}
             \centering
             \raisebox{-.5\height}{\includegraphics[width=\textwidth]{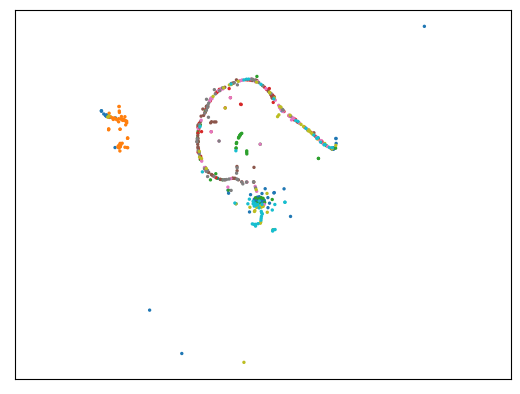}}
            \end{minipage} \\ PI, $\text{CH}=569$} 
    \hfill\null \\
    \midrule 
    \hfill \makecell{\begin{minipage}[c]{0.33\textwidth}
             \centering
             \raisebox{-.5\height}{\includegraphics[width=\linewidth]{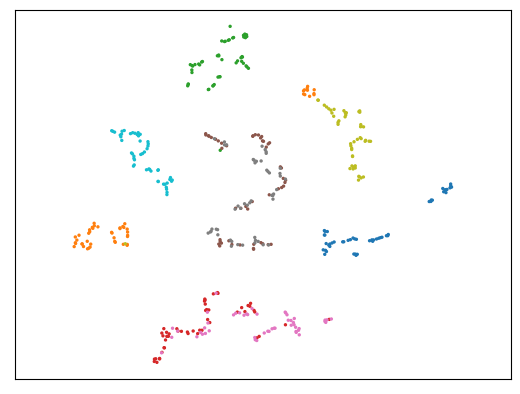}}
            \end{minipage}\\ PS, $\text{CH}=1497$  } 
    \hfill \makecell{\begin{minipage}[c]{0.33\textwidth}
             \centering
             \raisebox{-.5\height}{\includegraphics[width=\linewidth]{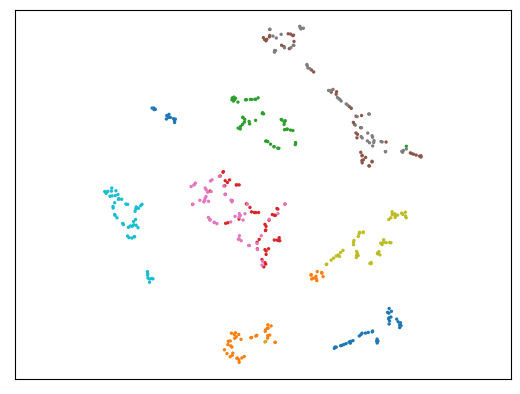}}
            \end{minipage} \\PL, $\text{CH}=1655$  } 
    \hfill\null \\
  \bottomrule
\end{tabularx}
\end{table}

Here we use Vrep, Verp$_d$, PI, PS, and PL in unsupervised dimensionality reduction task. For an EPD dataset, we obtain its high-dimensional representation with different methods and then input into t-SNE \cite{van2008visualizing} for a final 2d representation for visualization. For evaluation on the 2d repesentation, we use Calinski-Harabasz (CH) Index \cite{calinski1974dendrite}, defined as
\begin{equation*}
    CH = \frac{\sum_{i}n_i\|c_i-c\|_2/(N_C-1)}{\sum_i\sum_{x\in C_i}\|x-c_i\|_2/(n-N_C)},
\end{equation*}
where $c$ is the center of the dataset and $c_i$ is the center of cluster $C_i$. $n_i$ is the number of points in cluster $C_i$ and $N_C$ is the number of clusters. A higher CH value indicates a better cluster distribution.


We use a 3d dynamical system dataset \cite{lindstrom2002dynamics,dong2024persistence}, the same as that in the Stability Section, which describes discrete food chain model. This dataset contains 9 classes, each class contains 50 point clouds with each point cloud having 2000 points. Each class corresponds to a parameter of the dynamical system. For EPD computation, we sample $10\%$ points for sampled PD, and each EPD consists of 50 sampled PDs. Each sampled PD is computed via Rips Complex. The perplexity of t-SNE is searched in the range of $[5,10,15,20]$.

The CH result and visualization are shown in Table \ref{tab: DR}. Vrep outperforms the rest methods, followed by Vrep$_d$, which is consistent with our classification result in Table 1 of the main paper. Compared with PS and PL, Vrep and Vrep$_d$ tend to produce clusters with small intracluster distance. Part of class 2 is mixed with class 8 in PS and PL, while in Vrep and Vrep$_d$, class 2 is not split. 

\subsection{Runtime comparison with PointNet}
\label{appendix: time pointnet}
We compare the time cost of Vrep with PointNet in Table \ref{tab: time pointnet}. For small datasets, the sum of training and vectorization time is significantly smaller than the end-to-end training time of PointNet. For larger datasets like DPTW and Earthquakes, PointNet incurs a comparable time cost to Vrep because the time cost of Vrep increases quadratically with respect to the size of the dataset and the time for Vrep vectorization takes up the majority of the total time cost.


\begin{table}[h]
    \centering
    \caption{Time cost (s) of Vrep, Vrep$_d$ and PointNet. For Vrep and Vrep$_d$, the vectorization (training) column is the time cost of constructing feature (training the Random Forest classifier). PointNet obtains the hidden vector and classifier in an end-to-end manner. The end-to-end column is the total time cost of the end-to-end training process. Vrep and Vrep$_d$ are executed on CPU, while PointNet is on GPU.}
    \label{tab: time pointnet}
    \vspace{8pt}
    \begin{tabular}{|c|c|c|c|c||c|}
    \hline
    & \multicolumn{2}{c|}{Vrep} & \multicolumn{2}{c||}{Vrep$_d$} & PointNet  \\ \hline
    & vectorization & train & vectorization & train & end-to-end\\\hline
        Protein & 3.21  & 0.13  & 2.43  & 0.17  & 23.78   \\ 
        CAD & 5.55  & 0.18  & 6.34  & 0.16  & 42.77    \\ 
        CAD$_{0.01}$ & 4.25  & 0.15  & 4.96  & 0.15  & 44.35    \\ 
        CAD$_{0.05}$ & 4.20  & 0.14  & 4.93  & 0.14  & 43.42   \\ 
        Beef & 0.96  & 0.18  & 1.12  & 0.10  & 22.74    \\ 
        BirdChicken & 0.43  & 0.11  & 0.50  & 0.07  & 20.47   \\ 
        DPTW & 75.87  & 1.69  & 88.24  & 1.75  & 69.83    \\ 
        Earthquakes & 57.26  & 1.32  & 66.48  & 1.19  & 69.35   \\ 
        ECG200 & 10.16  & 0.30  & 11.91  & 0.30  & 32.78   \\ \hline
    \end{tabular}
\end{table}


\subsection{Scaleup test on the dataset size}
\label{appendix: scaleup}
We report the time ratio in the scaleup test on dataset size in Figure \ref{fig: scaleup2}, corresponding to the actual time cost in Figure 6 in the main paper.
 Vrep (Vrep$_d$) has quadratic time complexity while PI, PL and PS all have linear complexity. This is consistent with our analysis in the Time Complexity Section of the main paper.

 \begin{figure}[h]
\center
\includegraphics[width=0.45\textwidth]{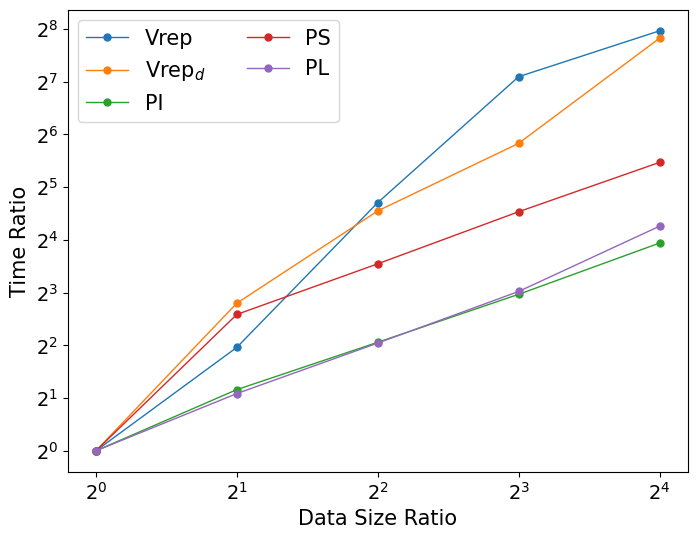}
\caption{Scaleup test on the dataset size, where the dataset size is 126 at data size ratio =1. The number $n$ of sampled PD in each EPD is set to be 10. The time ratio is reported here.}
\label{fig: scaleup2}
\end{figure}

\subsection{Synthetic Correlation with Wasserstein Distance}
\label{appendix: wass}

To isolate the approximation behavior of different vectorizations, we construct synthetic (80) EPDs with controlled distributional changes. Each EPD is an empirical measure with 120 points in the birth-death plane. We consider three settings: (i) \emph{coarse mass shift}, where one mixture component moves across separated regions of the plane; (ii) \emph{within-cell shift}, where a compact component undergoes a small local translation; and (iii) \emph{local shape change}, where the covariance orientation changes while the mean remains fixed. For each setting, we generate 80 EPDs, compute pairwise $W_1$ distances, and report Spearman's $\rho$ and Kendall's $\tau$ between $W_1$ and representation-space distances.

\paragraph{Synthetic EPD construction.}
We construct three synthetic EPD families to isolate different approximation regimes. Each EPD is an empirical measure with 120 points in the birth-death plane, and all sampled points are projected to satisfy $d>b$. The parameter $\theta\in[0,1]$ controls the distributional change within each family.

\begin{itemize}
    \item \textbf{Coarse mass shift.} Each EPD is sampled from a two-component Gaussian mixture. One component remains fixed, while the other moves as $\theta$ increases. This setting creates a coarse displacement of mass across separated regions of the birth-death plane.
    \item \textbf{Within-cell shift.} Each EPD is sampled from a single compact Gaussian component whose mean undergoes a small local translation. This setting tests whether a vectorization can detect fine local movement that may remain inside the same Voronoi cell.
    \item \textbf{Local shape change.} Each EPD is sampled from a Gaussian distribution with fixed mean but rotating anisotropic covariance. This setting changes the local shape of the distribution while keeping the coarse mass location similar.
\end{itemize}

\begin{figure}[h]
\centering
\includegraphics[width=\linewidth]{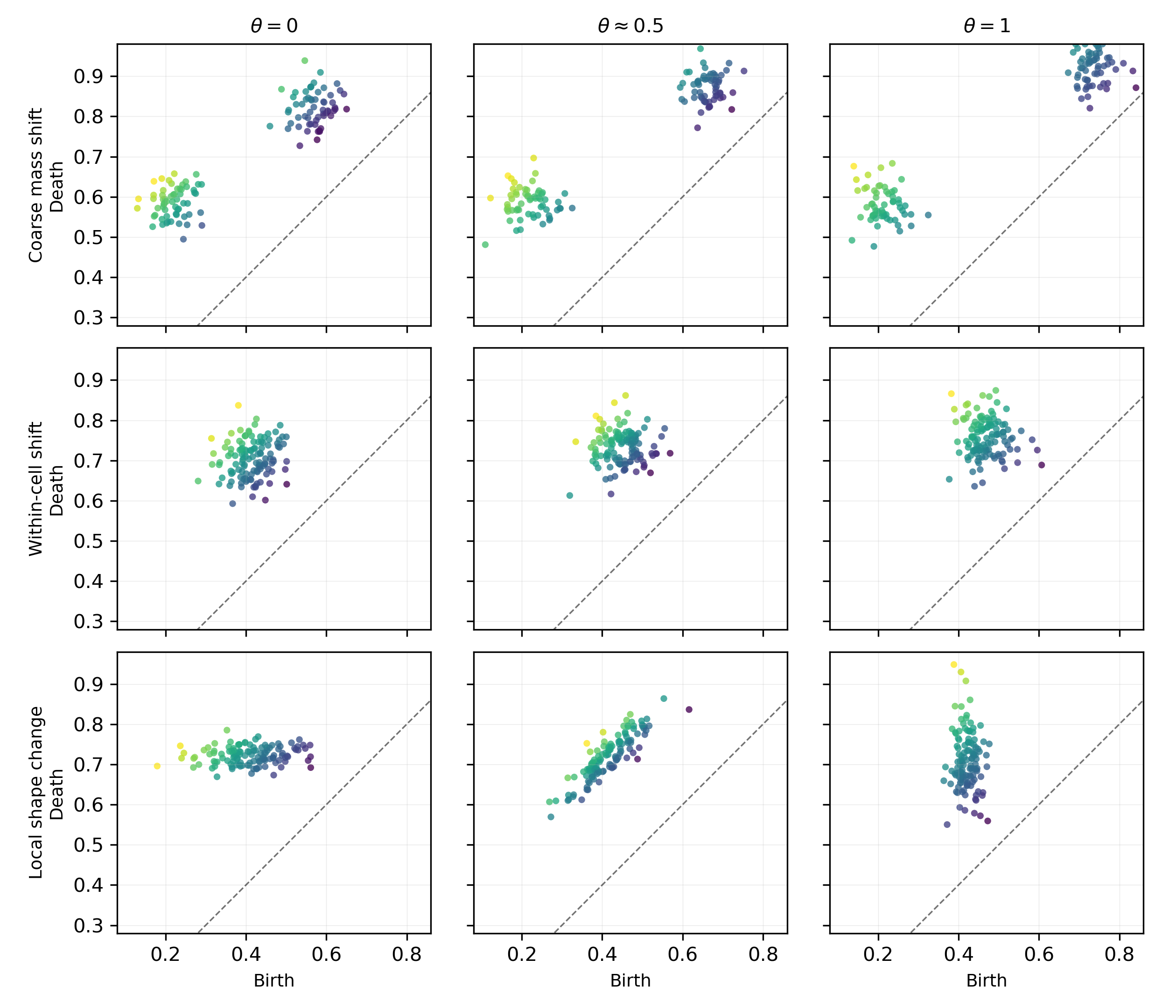}
\caption{Synthetic EPD families used in the Wasserstein correlation experiment. Columns show representative EPDs at $\theta=0$, $\theta\approx0.5$, and $\theta=1$. The coarse mass shift setting mainly changes mass allocation across separated regions; the within-cell shift setting creates small local translations; and the local shape change setting modifies within-region geometry while keeping the mean fixed.}
\label{fig:synthetic_epd_datasets}
\end{figure}

\begin{table}[h]
\centering
\caption{Synthetic correlation with Wasserstein distance. Each entry reports Spearman's $\rho$ / Kendall's $\tau$ between pairwise representation distances and $W_1$ distances.}
\label{tab: codebook new}
    \vspace{8pt}
\resizebox{0.99\textwidth}{!}{
\begin{tabular}{lccccc}
\toprule
Synthetic setting & Vrep & Vrep$_d$ & PI & PS & PL \\
\midrule
Coarse mass shift & 0.992 / 0.920 & 0.989 / 0.910 & 0.987 / 0.899 & 0.907 / 0.731 & 0.896 / 0.719 \\
Within-cell shift & 0.989 / 0.909 & 0.989 / 0.909 & 0.991 / 0.918 & 0.978 / 0.875 & 0.661 / 0.471 \\
Local shape change & 0.982 / 0.887 & 0.983 / 0.890 & 0.980 / 0.875 & 0.875 / 0.687 & 0.904 / 0.727 \\
\bottomrule
\end{tabular}}
\end{table}

The results in Table \ref{tab: codebook new} support the intended trade-off. Vrep and Vrepd obtain the highest correlations in the coarse mass shift setting, where Wasserstein variation is largely explained by mass displacement across regions that can be captured by Voronoi cells. In the within-cell shift setting, PI is slightly better, indicating that a smooth representation can be more sensitive to local translations when the Voronoi partition is already sufficiently coarse. These synthetic results therefore support the interpretation of Vrep as a complementary representation: it can preserve Wasserstein-scale variation under suitable cell resolution, but it is not a uniformly distance-preserving embedding.

\subsection{Codebook construction in Vrep/Vrep$_d$: sampling, EPD quantization, and ATOL-style centers}
\label{appendix: codebook construction}

The default Vrep construction samples codebooks from EPD supports. To test how sensitive the
empirical behavior is to this choice, we repeat the synthetic Wasserstein-correlation experiment
with two alternatives. The first alternative uses the official implementation of the EPD
quantization procedure from the GUDHI tutorial utilities (\url{https://github.com/GUDHI/TDA-tutorial/tree/master/tutorials/utils}) for
\citep{divol2021estimation}, with an additional diagonal Voronoi cell during center optimization.
For this variant, we use the same total number of codebooks and the same number of bins per
codebook as the sampled-support Vrep construction, and initialize each run from a random subset of
the pooled EPD support. This initialization avoids the first-diagram bias of the tutorial's default
high-persistence initializer in multimodal synthetic diagrams.
The second alternative uses ATOL-style global centers: we learn a global set of $k$-means centers
from the pooled training supports and use these centers as assignment anchors. In all cases, we keep
Vrep's hard Voronoi histogram readout unchanged, so the experiment isolates the effect of codebook
construction rather than changing the downstream representation.

\begin{table}[h]
\centering
\caption{Effect of codebook construction on synthetic Wasserstein correlation. Each entry reports
Spearman's $\rho$ / Kendall's $\tau$ between pairwise Vrep distances and $W_1$ distances.}
\label{tab: aaaa}
\begin{tabular}{llcc}
\toprule
Synthetic setting & Codebook construction & Vrep & Vrepd \\
\midrule
Coarse mass shift & Sampled supports & 0.992 / 0.920 & 0.989 / 0.910 \\
Coarse mass shift & EPD quantization centers & 0.993 / 0.922 & 0.992 / 0.918 \\
Coarse mass shift & ATOL-style centers & 0.969 / 0.844 & 0.962 / 0.829 \\
\midrule
Within-cell shift & Sampled supports & 0.989 / 0.909 & 0.989 / 0.909 \\
Within-cell shift & EPD quantization centers & 0.985 / 0.895 & 0.986 / 0.898 \\
Within-cell shift & ATOL-style centers & 0.941 / 0.796 & 0.945 / 0.806 \\
\midrule
Local shape change & Sampled supports & 0.982 / 0.887 & 0.983 / 0.890 \\
Local shape change & EPD quantization centers & 0.970 / 0.850 & 0.970 / 0.850 \\
Local shape change & ATOL-style centers & 0.948 / 0.800 & 0.943 / 0.793 \\
\bottomrule
\end{tabular}
\end{table}

The results in Table \ref{tab: aaaa} show that codebook construction matters. Sampled-support codebooks (the default choice of Vrep/Vrep$_d$) give the strongest
and most stable correlations in this experiment. ATOL-style centers remain competitive but are
consistently weaker, which is expected because a single global set of centers has lower resolution
than the concatenation of multiple sampled codebooks. EPD quantization is competitive on
all three settings after matching the sampled-support resolution and using a pooled-support
initializer. This comparison also shows that EPD quantization is sensitive to initialization: using
only the first diagram's high-persistence points can miss modes in multimodal diagrams and produce
poor Voronoi histograms. With a mode-covering initialization, quantized codebooks provide a
principled alternative to sampled supports, while ATOL-style centers give a lower-resolution but
still competitive global-center baseline. EPD quantization and ATOL-style have certain objective functions and hence require more time cost for optimization, while the sampled supports do not require such techniques and is faster, as shown in Table \ref{tab: ave time cost}.

\begin{table}[!ht]
    \centering
\caption{Average time cost (s).}
\label{tab: ave time cost}
    \begin{tabular}{lccc}
    \toprule
        ~ & Sample & Quantization & ATOL \\\midrule
        Coarse mass shift, Vrep & 0.0012 $\pm$ 0.0001 & 0.5031 $\pm$ 0.2604 & 0.0912 $\pm$ 0.0641 \\ 
        Coarse mass shift, Vrep$_d$ & 0.0011 $\pm$ 0.0000 & 0.3617 $\pm$ 0.0048 & 0.0592 $\pm$ 0.0036 \\ \hline
        Local shape change, Vrep & 0.0015 $\pm$ 0.0001 & 0.3900 $\pm$ 0.0083 & 0.0784 $\pm$ 0.0056 \\ 
        Local shape change, Vrep$_d$ & 0.0013 $\pm$ 0.0002 & 0.3750 $\pm$ 0.0131 & 0.0715 $\pm$ 0.0048 \\ \hline
        Within cell shift, Vrep & 0.0015 $\pm$ 0.0001 & 0.3988 $\pm$ 0.0138 & 0.0835 $\pm$ 0.0049 \\ 
        Within cell shift, Vrep$_d$ & 0.0014 $\pm$ 0.0001 & 0.3704 $\pm$ 0.0090 & 0.0830 $\pm$ 0.0089 \\ \bottomrule
    \end{tabular}
\end{table}


 \section{On the use of Large Language Models}
 \label{appendix: llm}

 Large Language Models (LLMs) were utilized in the polishing phase of this paper’s preparation. Specifically, LLMs were employed to optimize linguistic clarity, enhance stylistic coherence, and correct minor grammatical or syntactical inconsistencies. All core intellectual content, including conceptual frameworks, empirical observations, argumentative structure, and citation alignment, was developed, curated, and validated exclusively by the human authors.



\end{document}